\def\eqref#1{equation~\ref{#1}}
\def\1{\bm{1}}
\DeclareMathAlphabet{\mathsfit}{\encodingdefault}{\sfdefault}{m}{sl}
\SetMathAlphabet{\mathsfit}{bold}{\encodingdefault}{\sfdefault}{bx}{n}
\newcommand{\E}{\mathbb{E}}
\pgfplotsset{width=7cm,compat=1.9}
\setlist{nolistsep,leftmargin=*}
\def\E{\mathbb{E}}
\def\mmd{\mathrm{MMD}}
\newcommand{\acorr}{\bm{A}_{\overline{ind}}\:}
\newcommand{\aind}{\bm{A}_{ind}\:}
\newcommand{\acaus}{\bm{A}_{cause}\:}
\newcommand{\aconf}{\bm{A}_{conf}\:}
\newcommand{\asel}{\bm{A}_{sel}\:}
\newcommand{\acorrinline}{\bm{A}_{\overline{ind}}}
\newcommand{\aindinline}{\bm{A}_{ind}}
\newcommand{\acausinline}{\bm{A}_{cause}}
\newcommand{\aconfinline}{\bm{A}_{conf}}
\newcommand{\aselinline}{\bm{A}_{sel}}
\newcommand{\vecA}{\bm{A}}
\newcommand{\vecX}{\bm{X}}
\newcommand{\RNum}[1]{\uppercase\expandafter{\romannumeral #1\relax}}
\newtheorem{definition}{Definition}[section]
\newcommand{\jk}[1]{{\textcolor{cyan}{JK: #1}}}
\newcommand{\amit}[1]{{\textcolor{purple}{#1 --amit}}}
\newcommand{\independent}{\textit{Independent}\xspace}
\newcommand{\causal}{\textit{Causal}\xspace}
\newcommand{\confounded}{\textit{Confounded}\xspace}
\newcommand{\selected}{\textit{Selected}\xspace}
\newcommand{\cacm}{\textit{CACM}\xspace}
\newcommand{\vecx}{\bm{x}}
\newcommand{\veca}{\bm{a}}
\newcommand{\update}[1]{{\color{brown} #1}}
\title{Modeling the Data-Generating Process is\\ Necessary for Out-of-Distribution\\ Generalization}
\author{Jivat Neet Kaur, Emre K\i c\i man, Amit Sharma 
\\
Microsoft Research 
\texttt{\{t-kaurjivat,emrek,amshar\}@microsoft.com} 
}
\newcommand{\new}{\marginpar{NEW}}
\begin{document}

\maketitle

\begin{abstract}

Recent empirical studies on domain generalization (DG) have shown that DG algorithms that perform well on some distribution shifts fail on others, and no state-of-the-art DG algorithm performs consistently well on all shifts. Moreover, real-world data often has multiple distribution shifts over different attributes; hence
we introduce \textit{multi}-attribute distribution shift datasets and find that the accuracy
of existing DG algorithms falls even further.
%
%
To explain these results, we provide a formal characterization of generalization under multi-attribute  shifts using a canonical causal graph. 
Based on the relationship between spurious attributes and the classification label,  we obtain \textit{realizations} of the canonical causal graph that characterize common distribution shifts  and show that each shift entails different independence constraints over observed variables. 
As a result, we prove that any algorithm based on a single, fixed constraint  cannot work well across all shifts, providing theoretical evidence for mixed empirical results on DG algorithms.   
Based on this insight, we develop {\em Causally Adaptive Constraint Minimization (\cacm)}, an algorithm that uses knowledge about the data-generating process 
to \textit{adaptively} identify and apply the correct independence constraints for regularization. 
%
%
Results on fully synthetic, MNIST, small NORB, and Waterbirds datasets, covering binary and multi-valued attributes and labels, show that adaptive dataset-dependent constraints lead to the  highest accuracy on unseen domains  whereas incorrect constraints  fail to do so. 
Our results demonstrate the importance of modeling the causal relationships inherent in the data-generating process. 
\end{abstract}

\section{Introduction}
\label{sec:intro}

To perform reliably in real world settings, machine learning models must be robust to distribution shifts -- where the training distribution differs from the test distribution. Given data from multiple domains that share a common optimal predictor, the \textit{domain generalization (DG)} task~\citep{wang2021generalizing,zhou2021domain} encapsulates this challenge by evaluating accuracy on an unseen domain.  Recent empirical studies of DG algorithms~\citep{wiles2022a,ye2022ood} have characterized different kinds of distribution shifts across domains. 
 Using MNIST as an example, a \textit{diversity} shift is when domains are created either by adding new values of a spurious attribute like rotation (e.g., Rotated-MNIST dataset~\citep{7410650,piratla2020efficient}) whereas a \textit{correlation }shift is when domains exhibit different values of  correlation between the class label and a spurious attribute like color (e.g., Colored-MNIST~\citep{https://doi.org/10.48550/arxiv.1907.02893}). 
 Partly because advances in representation learning for DG~\citep{ahuja2021invariance, pmlr-v139-krueger21a,pmlr-v139-mahajan21b, https://doi.org/10.48550/arxiv.1907.02893, Li_Yang_Song_Hospedales_2018, 10.1007/978-3-319-49409-8_35} have focused on either one of the shifts, these studies find that performance of state-of-the-art DG algorithms
are not consistent across  different shifts: algorithms performing well on datasets with  one kind of shift fail on datasets with another kind of shift. 

In this paper, we pose a harder, more realistic question: What if a dataset exhibits two or more kinds of shifts \textit{simultaneously}? Such shifts over multiple attributes (where an attribute refers to a spurious high-level variable like rotation) are often observed in real data. For example, satellite imagery data demonstrates distribution shifts over time as well as the region captured~\citep{Koh2021WILDSAB}. To study this question,  we introduce \textit{multi-}attribute distribution shift datasets. For instance, in our Col+Rot-MNIST dataset (see Figure~\ref{fig:main_fig_mnist}),  both the color and rotation angle of digits can shift across data distributions. We find that existing DG algorithms that are often targeted for a specific shift fail to generalize in such settings: best accuracy falls from 50-62\% for individual shift MNIST datasets to $<$50\% (lower than a random guess) for the multi-attribute shift dataset.  
\begin{figure}[t]
\captionsetup[subfigure]{labelformat=empty}
\centering
    \begin{subfigure}{.5\textwidth}
    \centering
        \includegraphics[width=0.65\linewidth]{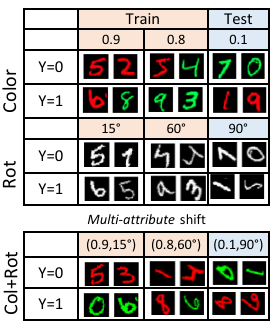}
        \caption{(a)}
    \end{subfigure}%
    \begin{subfigure}{.5\textwidth}
    \centering
     \includegraphics[width=0.9\linewidth]{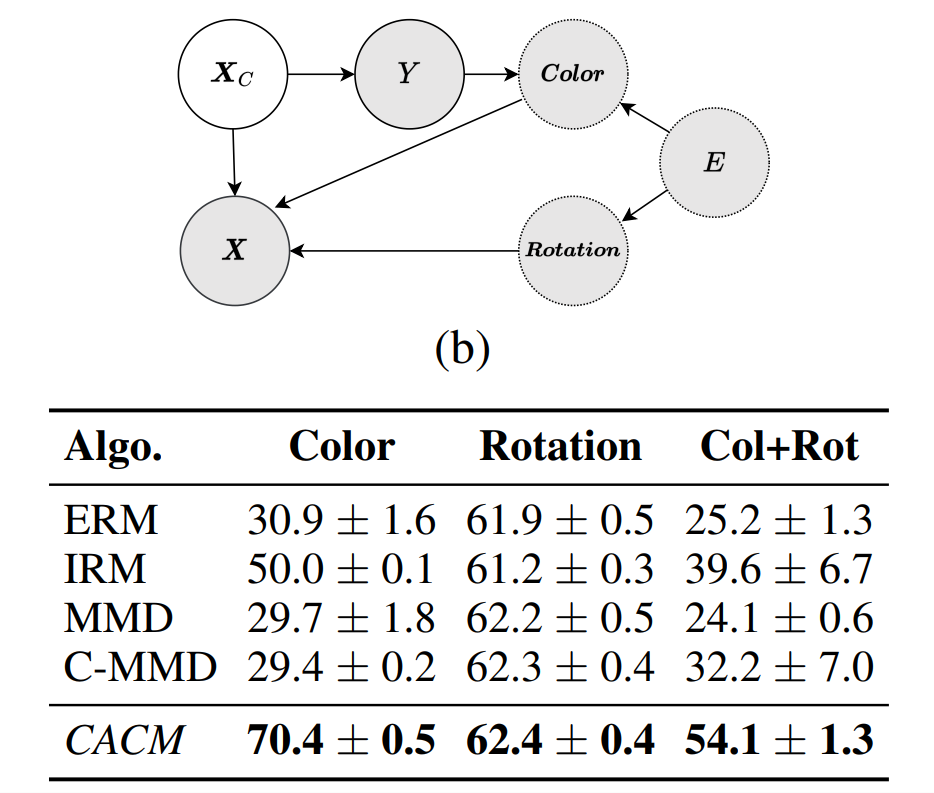}
    \caption{(c)}
    \end{subfigure}
    \caption{(a) Our \textit{multi-attribute} distribution shift dataset Col+Rot-MNIST. We combine Colored MNIST~\citep{https://doi.org/10.48550/arxiv.1907.02893} and Rotated MNIST~\citep{7410650} to introduce distinct shifts over \textit{Color} and \textit{Rotation} attributes. (b) The causal graph representing the data generating process for Col+Rot-MNIST. \textit{Color} has a correlation with \textit{Y} which changes across environments while \textit{Rotation} varies independently. (c) Comparison with DG algorithms optimizing for different constraints shows the superiority of {\em Causally Adaptive Constraint Minimization (\cacm)} (full table in Section~\ref{sec: expts}). 
    \vspace{-5mm}}
    \label{fig:main_fig_mnist}
\end{figure}
To explain such failures, we propose a causal framework for generalization under multi-attribute distribution shifts. 
We use a canonical causal graph to 
model commonly observed  distribution shifts. Under this graph, we characterize a distribution shift by the type of relationship  between spurious attributes and the classification label, leading to different \textit{realized} causal DAGs. 
Using \textit{d}-separation on the realized DAGs, we show that each shift entails distinct constraints over observed variables and prove that no conditional independence constraint is valid across all shifts.   
As a special case of \textit{multi}-attribute, when datasets exhibit a \textit{single}-attribute shift across domains, this result provides an explanation for the inconsistent performance of DG algorithms reported by ~\cite{wiles2022a,ye2022ood}. It implies
that any algorithm based on a single, fixed independence constraint cannot work
well across all shifts: there will be a dataset on which it will fail (Section~\ref{sec: fixed_constraint_failure}).

We go on to ask if we can develop an algorithm that generalizes to different kinds of individual shifts as well as simultaneous \textit{multi-}attribute shifts. 
For the common shifts modeled by the canonical graph, we show that identification of the correct regularization constraints requires knowing only the type of relationship between attributes and the label, not the full graph.  As we discuss in Section~\ref{sec:causal_graph_shifts}, the type of shift for an attribute is often available or can be inferred for real-world datasets. 
Based on this, we propose {\em Causally Adaptive Constraint Minimization (\cacm)}, an algorithm that leverages knowledge about the data-generating process (DGP) to identify and apply the correct independence constraints for regularization. 
Given a dataset with auxiliary attributes and their relationship with the target label,  \cacm constrains the model's representation to obey the conditional independence constraints satisfied by causal features of the label, generalizing past work on causality-based regularization~\citep{pmlr-v139-mahajan21b,veitch2021counterfactual,pmlr-v151-makar22a} to multi-attribute shifts.

We evaluate \cacm on  novel multi-attribute shift datasets based on MNIST, small NORB, and Waterbirds images.  Across all datasets,   applying the incorrect constraint, often through an existing DG algorithm,  leads to significantly lower accuracy than the correct constraint. Further, \cacm achieves substantially better accuracy than existing algorithms on datasets with multi-attribute shifts as well as individual shifts. 
Our contributions include:
\begin{itemize}[leftmargin=*,topsep=0pt, itemsep=0pt]{}
    \item Theoretical result that an algorithm using a fixed independence constraint cannot yield an optimal classifier on all datasets.
    \item An algorithm, \textit{Causally Adaptive Constraint Minimization (\cacm)}, to adaptively derive the correct regularization constraint(s) based on the causal graph that outperforms existing DG algorithms.
    \item Multi-attribute shifts-based benchmarks for domain generalization where existing algorithms fail.
\end{itemize}

\section{Generalization under Multi-Attribute Shifts}

We consider the supervised learning setup from \citep{wiles2022a} where each row of train data $(\vecx_i, \veca_i, y_i)_{i=1}^{n}$ contains input features $\vecx_i$ (e.g., X-ray pixels),  a set of nuisance or spurious attributes $\veca_i$ (e.g., vertical shift, hospital) and class label $y_i$ (e.g., disease diagnosis).
The attributes represent variables that are often recorded or implicit in data collection procedures. Some attributes represent a property of the input (e.g., vertical shift) while others represent the domain from which the input was collected (e.g., hospital). The attributes affect the observed input features $\vecx_i$ but do not cause the target label, and hence are \textit{spurious} attributes. The final classifier $g(\vecx)$ is expected to use only the input features. However, as new values of attributes are introduced or as the correlation of attributes with the label changes, we obtain different conditional distributions $P(Y|\vecX)$.  Given a set of data distributions $\mathcal{P}$, we assume that the train data is sampled from distributions, $\mathcal{P}_{\mathcal{E}tr} = \{ {P}_{E1}, P_{E2} , \cdots \} \subset \mathcal{P}$ while the test data is assumed to be sampled from a single unseen distribution, $\mathcal{P}_{\mathcal{E}te}=\{P_{Ete}\} \subset \mathcal{P}$. Attributes and class labels are assumed to be discrete.

\subsection{Risk invariant predictor for generalization under shifts}


The goal is to learn a classifier $g(\vecx)$ 
using train domains such that it generalizes and achieves a similar, small risk on test data from unseen ${P}_{Ete}$ as it achieves on the train data. Formally, given a set of distributions $\mathcal{P}$, we define a risk-invariant predictor~\citep{pmlr-v151-makar22a} as,
\begin{definition}
\textbf{Optimal Risk Invariant Predictor for $\mathcal{P}$ (from~\citep{pmlr-v151-makar22a})} Define the risk of predictor $g$ on distribution $P \in \mathcal{P}$ as $R_P(g)=\mathbb{E}_{\vecx, y \sim P} \ell(g(\vecx), y)$ where $\ell$ is cross-entropy or another classification loss. Then, the set of risk-invariant predictors obtain the same risk across all distributions $P\in \mathcal{P}$, and set of the optimal risk-invariant predictors is defined as the risk-invariant predictors that obtain minimum risk on all distributions. 
\begin{equation}\label{eq:risk-invariant}
    g_{rinv} \in \arg \min_{g\in G_{rinv}} R_P(g) \text{ } \forall P \in \mathcal{P} \text{\  where } G_{rinv}=\{g : R_{P}(g)=R_{P'}(g) \forall P, P' \in \mathcal{P} \}
\end{equation}
\end{definition}

An intuitive way to obtain a risk-invariant predictor is to consider only the parts of the input features ($\vecX$) that cause the label $Y$ and ignore any variation due to the spurious attributes. Let such latent, unobserved causal features be $\vecX_c$. Due to independence and stability of causal mechanisms~\citep{PetersJanzingSchoelkopf17}, we can assume that $P(Y|\vecX_c)$ remains invariant across different distributions. 
Using the notion of risk invariance, we can now define the multi-attribute generalization problem as,

\begin{definition}\label{def:gen-multi-shift}
\textbf{Generalization under Multi-attribute shifts.} 
Given a target label $Y$, input features $\vecX$, attributes $\vecA$, and latent causal features $\vecX_c$, consider a set of distributions $\mathcal{P}$ such that $P(Y|\vecX_c)$ remains invariant while $P(\vecA|Y)$ changes across individual distributions. Using a training dataset $(\vecx_i, \veca_i, y_i)_{i=1}^{n}$ sampled from a subset of distributions $\mathcal{P}_{\mathcal{E}tr} \subset \mathcal{P}$, the generalization goal is to learn an optimal risk-invariant predictor over   $\mathcal{P}$.
\end{definition}
\noindent\textbf{Special case of single-attribute shift. } When $|\vecA|=1$, we obtain the single-attribute shift problem that is widely studied~\citep{wiles2022a, ye2022ood, Gulrajani2021InSO}.

\subsection{A general principle for necessary conditional independence constraints}

\begin{wraptable}{r}{0.5\textwidth}
    \vspace{-1em}
  \caption{Statistic optimized by DG algorithms. $\mathrm{match}$ matches the statistic across $E$. $h$ is  domain classifier (loss $\ell_d$) using shared representation $\phi$. 
  }
  \label{table:baselines_constraints_main}
  \centering
  \begin{scriptsize}
  \begin{tabular}{ccc}
    \toprule
    \textbf{Constraint} & \textbf{Statistic} & \textbf{Algo.} \\
    \midrule
    $\phi \perp\!\!\perp E$ & $\mathrm{match}\:\:\mathbb{E}[\phi(x)| E]\:\:\forall\:\:E$ & MMD \\
    & $\max_E\:\:\mathbb{E}[\ell_d(h(\phi(x)), E)]$ & DANN \\
    &  $\mathrm{match}\:\:\mathrm{Cov}[\phi(x)| E]\:\:\forall\:\:E$ & CORAL\\
    \midrule
    $Y \perp\!\!\perp E|\phi$ & $\mathrm{match}\:\:\mathbb{E}[Y|\phi(x), E]\:\:\forall\:\:E$ & IRM~\\
    & $\mathrm{match}\:\:\mathrm{Var}[\ell(g(x),y)|E]\:\:\forall\:\:E$ & VREx\\
    \midrule
    $\phi \perp\!\!\perp E|Y$ & $\mathrm{match}\:\:\mathbb{E}[\phi(x)| E, Y=y]\:\:\forall\:\:E$ & C-MMD\\
    & $\max_E\:\:\mathbb{E}[\ell_d(h(\phi(x)), E)|Y=y)]$ & CDANN\\
    \bottomrule
  \end{tabular}
  \end{scriptsize}
\end{wraptable}

 In practice, the causal features $\vecX_c$ are unobserved and a key challenge is to learn $\vecX_c$ using the observed $(\vecX, Y, \vecA)$. 
 We focus on  representation learning-based~\citep{wang2021generalizing} DG algorithms, typically  characterized by a regularization constraint that is added to a standard ERM loss such as cross-entropy. Table~\ref{table:baselines_constraints_main} shows three independence constraints that form the basis of many popular DG algorithms, assuming environment/domain $E$ as the attribute and $\ell$ as the main classifier loss. 
 We now provide a general principle for deciding which constraints to choose   for learning a risk-invariant predictor for a dataset.

We utilize a strategy from past work~\citep{pmlr-v139-mahajan21b,  veitch2021counterfactual} to use graph structure of the underlying data-generating process (DGP). 
We assume that the predictor can be represented as $g(\vecx)=g_1(\phi(\vecx))$ where $\phi$ is the representation. 
 To learn a risk-invariant $\phi$, we identify the conditional independence constraints satisfied by causal features $\vecX_c$ in the causal graph and enforce that learnt representation $\phi$ should follow the same constraints. If $\phi$ satisfies the constraints, then any function $g_1(\phi)$ will also satisfy them. Below 
 we show that the constraints are necessary under simple assumptions on the causal DAG representing the DGP for  a dataset. All proofs are in Suppl.~\ref{app:proofs}. 

\begin{restatable}{theorem}{xcnec}\label{thm:xc-nec}
Consider a causal DAG $\mathcal{G}$ over $\langle \vecX_c, \vecX, \vecA, Y \rangle$ and a corresponding generated dataset $(\vecx_i, \veca_i, y_i)_{i=1}^{n}$,  where $\vecX_c$ is unobserved. Assume that graph $\mathcal{G}$ has the following property: $\vecX_c$ is defined as the set of all parents of  $Y$ ($\vecX_c \rightarrow Y$);  and $\vecX_c, \vecA$ together cause $\vecX$ ($\vecX_c \rightarrow \vecX$, and $\vecA \rightarrow \vecX$). The graph may have any other edges (see, e.g., DAG in Figure~\ref{fig:main_fig_mnist}(b)). Let $\mathcal{P}_\mathcal{G}$ be the set of distributions consistent with graph $\mathcal{G}$,  obtained by changing $P(\vecA|Y)$
but not $P(Y|\vecX_c)$. Then the conditional independence constraints satisfied by $\vecX_c$ are necessary for a (cross-entropy) risk-invariant predictor over $\mathcal{P}_\mathcal{G}$. That is, if a predictor for $Y$ does not satisfy any of these constraints, then there exists a data distribution $P'\in \mathcal{P}_\mathcal{G}$ such that predictor's risk will be higher than its risk in other distributions. 
\end{restatable}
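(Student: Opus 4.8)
The plan is to prove the contrapositive: a predictor $g=g_1\circ\phi$ whose representation $\phi$ violates the conditional-independence (CI) statement that $\vecX_c$ inherits from $\mathcal{G}$ cannot keep its cross-entropy risk constant across $\mathcal{P}_\mathcal{G}$. First I would read off the operative CI constraint by $d$-separation. Since $\vecX_c$ is exactly the parent set of $Y$ and the only structural mechanism we perturb is $P(\vecA\mid Y)$, the label mechanism $P(Y\mid\vecX_c)$ and every downstream mechanism generating $\vecX$ from its parents are held fixed; in particular $Y\perp\!\!\perp\vecA\mid\vecX_c$ gives $P(Y\mid\vecX_c,\vecA)=P(Y\mid\vecX_c)$. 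Meanwhile $\vecX_c$ is $d$-separated from the environment index $E$ (which tracks the choice of $P(\vecA\mid Y)$) by the conditioning set dictated by $\mathcal{G}$ (no conditioning, or conditioning on $Y$, according to the realization). This $d$-separation statement---one of $\phi\perp\!\!\perp E$, $Y\perp\!\!\perp E\mid\phi$, or $\phi\perp\!\!\perp E\mid Y$ in Table~\ref{table:baselines_constraints_main}---is precisely the constraint to be shown necessary.

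Next I would turn the risk into an invariant kernel integrated against a shifting law. For fixed $g$, define the pointwise conditional loss $f(\vecx_c,\veca)=\mathbb{E}[\ell(g(\vecX),Y)\mid\vecX_c=\vecx_c,\vecA=\veca]$. Because every mechanism other than $P(\vecA\mid Y)$ is fixed, the conditional law of $(\vecX,Y)$ given $(\vecX_c,\vecA)$ is invariant over $\mathcal{P}_\mathcal{G}$, so $f$ is a single $P$-independent function and $R_P(g)=\mathbb{E}_{(\vecX_c,\vecA)\sim P}[f(\vecX_c,\vecA)]$; all the $P$-dependence of the risk now lives in the joint law of $(\vecX_c,\vecA)$. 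Using the graph-implied relation between $\vecX_c$ and $\vecA$---for the representative correlation-shift realization this is $\vecX_c\perp\!\!\perp\vecA\mid Y$---I would factor $R_P(g)=\sum_y P(y)\,\mathbb{E}_{\vecA\sim P(\cdot\mid y)}\big[\bar f(y,\vecA)\big]$ with $\bar f(y,\veca)=\mathbb{E}_{\vecX_c\sim P(\cdot\mid y)}f(\vecX_c,\veca)$, where $P(y)$ and $P(\vecX_c\mid y)$ are fixed while $P(\vecA\mid y)$ ranges over all laws on the support.

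The constructive heart is then short: risk-invariance across the whole family forces $\bar f(y,\cdot)$ to be constant in $\veca$ on the support for every $y$, since otherwise I could pick $P'\in\mathcal{P}_\mathcal{G}$ whose $P'(\vecA\mid y)$ concentrates on a value $\veca^{*}$ maximizing $\bar f(y,\cdot)$, strictly raising $R_{P'}(g)$ above its value under a baseline $P$ that spreads mass---exactly the claimed failure. Finally I would convert ``$\bar f(y,\cdot)$ constant'' into the stated CI constraint on $\phi$: because log-loss is a strictly proper scoring rule and $P(Y\mid\vecX_c)$ is the invariant Bayes conditional, constancy of the conditional-expected loss in $\veca$ pins down that the law of $\phi(\vecX)$ given $Y$ (resp.\ the law of $\phi(\vecX)$, or the law of $Y$ given $\phi(\vecX)$) does not vary with $E$, i.e.\ $\phi$ satisfies the constraint $d$-separation assigns to $\vecX_c$.

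The main obstacle I anticipate is this last conversion step, not the witness construction. The inequality direction---a violated constraint yields a $\veca$-dependent $\bar f$ and hence a shift that strictly increases risk---is clean and is what the theorem literally asserts. The delicate point is ruling out accidental cancellations: invariance of the scalar risk is a priori weaker than the full CI statement, so I must use strict properness of cross-entropy together with invariance of $P(Y\mid\vecX_c)$ to show that a genuine $\veca$-dependence of $\phi$'s conditional law cannot be masked by averaging over $\vecX_c$. A secondary, bookkeeping obstacle is that the operative $d$-separation statement changes with the position of $\vecA$ relative to $Y$ across realizations of $\mathcal{G}$ (e.g.\ a collider at $Y$ in the confounded case opens rather than blocks the path); the same three-step template applies in each case, but Step~1 must be re-run per realization---or argued generically from ``$\vecX_c$ is the parent set of $Y$ and only $P(\vecA\mid Y)$ shifts''---to identify which of the three constraints is the necessary one.
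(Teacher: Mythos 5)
Your proposal breaks at its first technical step. You claim $Y \perp\!\!\perp \vecA \mid \vecX_c$, and hence that the conditional law of $(\vecX,Y)$ given $(\vecX_c,\vecA)$ is invariant over $\mathcal{P}_\mathcal{G}$, so that $f(\vecx_c,\veca)=\mathbb{E}[\ell(g(\vecX),Y)\mid \vecX_c=\vecx_c,\vecA=\veca]$ is a single $P$-independent kernel. This is false precisely in the realizations the theorem targets. The local Markov property makes $Y$ independent of its \emph{non-descendants} given its parents $\vecX_c$; but when the correlation arises because $Y$ causes the attribute (\causal shift), or through selection, $\vecA$ is a descendant of $Y$ (or a conditioned-on collider links them), so $Y \not\perp\!\!\perp \vecA \mid \vecX_c$. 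Concretely, in the \causal realization $P(Y\mid \vecX_c,\vecA) \propto P(\vecA\mid Y)\,P(Y\mid\vecX_c)$, which shifts exactly when $P(\vecA\mid Y)$ shifts; your kernel $f$ is therefore not invariant, and the identity $R_P(g)=\mathbb{E}_{(\vecX_c,\vecA)\sim P}[f(\vecX_c,\vecA)]$ with a fixed $f$ fails. The repair is to keep $y$ as an argument: $f(\vecx_c,\veca,y)=\mathbb{E}[\ell(g(\vecX),y)\mid\vecX_c=\vecx_c,\vecA=\veca]$ \emph{is} invariant (using $\vecX\perp\!\!\perp Y\mid \vecX_c,\vecA$, which does follow from the graph), with all $P$-dependence pushed into the joint law of $(\vecX_c,\vecA,Y)$; your later formula $\bar f(y,\veca)$ tacitly assumes this corrected version, but as written the decomposition is inconsistent.

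The second, deeper gap is the step you yourself flag: converting constancy of the scalar $\bar f(y,\cdot)$ into the distributional CI statement --- equivalently, showing that a CI violation by $\phi$ forces $\bar f(y,\cdot)$ to be non-constant for some $y$. Your point-mass witness is fine, and is in fact more explicit than the paper's construction, but it only bites once non-constancy of $\bar f$ is established, and strict properness does not establish it: properness identifies the unique minimizer of expected log-loss among predicted distributions; it does not make the map from (conditional law of $\phi(\vecX)$ given $y,\veca$) to (expected loss) injective, so two different conditional laws can yield identical averaged losses --- exactly the cancellation you worry about. The paper confronts this same difficulty by a different device: it factors $g(\vecx) = g'(\vecx,\vecx_c)\,h(\vecx_c)$ with $h$ an arbitrary nonzero function of $\vecx_c$, notes that $h(\vecX_c)$ inherits every constraint satisfied by $\vecX_c$ so the violation must live in $g'$, which therefore must depend on $\vecA$; it then builds two distributions agreeing on $P(\vecX_c,Y)$ and on the marginals of $\vecA$ but differing in $P(\vecA\mid Y)$, so that in the additive log-loss split the $\log h$ term cancels exactly and only the $\log g'$ term can carry a difference. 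Whether the paper's final ``and it does differ'' assertion is itself fully rigorous is debatable, but that factorization is its substitute for the lemma missing from your plan; without either that device or a genuine proof that averaging over $\vecX_c$ cannot mask the $\veca$-dependence of $\phi$'s conditional law, your argument does not close.
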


Thus, given a causal DAG, using \textit{d}-separation on $\vecX_c$ and observed variables, we can derive the correct regularization constraints to be applied on $\phi$. This yields a general principle to learn a risk-invariant predictor.  
 We use it to theoretically explain the inconsistent results of existing DG algorithms (Sec.~\ref{sec: fixed_constraint_failure}) and to propose an Out-of-Distribution generalization algorithm \cacm 
(Sec.~\ref{sec:cacm}).
Note that constraints from \cacm are necessary but not sufficient as $\vecX_c$ is not identifiable.


\begin{figure}[b]
    \vspace{-1em}
    \centering
    \begin{subfigure}[b]{0.35\textwidth}
    \centering
    \includegraphics[width=0.95\linewidth]{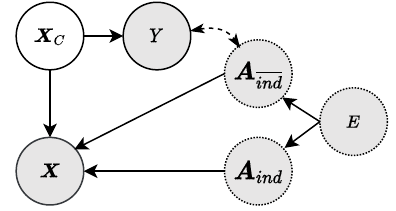}
    \caption{}
    \label{fig:CG_main_nocorr}
    \end{subfigure}%
    \begin{subfigure}[b]{0.35\textwidth}
    \centering
    \includegraphics[width=0.95\linewidth]{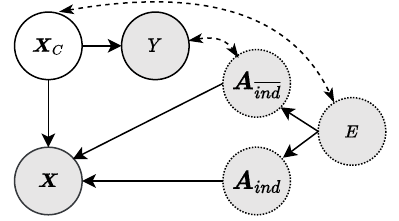}
    \caption{}
    \label{fig:CG_main_corr}
    \end{subfigure}%
    \begin{subfigure}[b]{0.30\textwidth}
    \centering
    \includegraphics[width=0.65\linewidth]{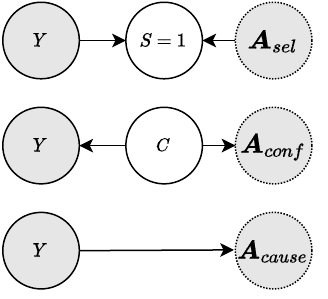}
    \caption{}
    \label{fig:CG_YEcorr}
    \end{subfigure}
\caption{\textbf{(a)} Canonical causal graph for specifying \textit{multi-attribute} distribution shifts; \textbf{(b)} canonical graph with $E$-$\vecX_c$ correlation. Anti-causal graph shown in Suppl.~\ref{app:additional_graphs}. Shaded nodes denote observed variables; since not all attributes may be observed, we use dotted boundary. Dashed lines denote correlation, between $\vecX_c$ and $E$, and $Y$ and $\acorr$. $E$-$\vecX_c$ correlation can be due to confounding, selection, or causal relationship;  all our results hold for any of these relationships (see Suppl.~\ref{app:rebuttal_ObjE}).  \textbf{(c)} Different mechanisms for $Y$-$\acorr$ relationship that lead to \causal, \confounded and \selected shifts. 
    }
\label{fig:CG_main}
\end{figure}

\section{Studying distribution shifts through a canonical causal graph}
\label{sec:approach}

\subsection{Canonical causal graph for common distribution shifts}
\label{sec:causal_graph_shifts}

We consider a \textit{canonical} causal graph (Figure~\ref{fig:CG_main}) to specify the common data-generating processes that can lead to a multi-attribute shift dataset.
Shaded nodes represent observed variables $\vecX$, $Y$; and the sets of attributes $\acorr$, $\aind$, and $E$ such that $\acorr\cup\aind\cup\:\{E\} = \vecA$. $\acorr$ represents the attributes correlated with label, $\aind$ the attributes that are independent of label, while $E$ is a special attribute for the domain/environment from which a data point was collected. Not all attributes need to be observed. For example, in some cases, only $E$ and a subset of $\acorr$, $\aind$ may be observed. In other cases, only $\acorr$ and $\aind$ may be observed while $E$ is not available. Regardless, we assume that all attributes, along with the causal features $\vecX_c$,  determine the observed features $\vecX$.
And the features $\vecX_c$ are the only features that cause $Y$. 
In the simplest case, we assume no label shift across environments i.e. marginal distribution of $Y$ is constant across train domains and test, $P_{Etr}(y)$ = $P_{Ete}(y)$ (see Figure~\ref{fig:CG_main_nocorr}). More generally, different domains may have different distribution of causal features (in the X-ray example, more women visit one hospital) 
as shown by $E<$--$>\vecX_c$ 
(Figure~\ref{fig:CG_main_corr}). 

Under the canonical graph, we characterize different kinds of shifts based on the relationship between spurious attributes $\vecA$ and the classification label $Y$. Specifically, $\aind$ is independent of the class label and a change in $P(\aind)$ leads to an \independent distribution shift. For $\acorr$, there are three mechanisms which can introduce the dashed-line correlation between $\acorr$ and $Y$ (Figure \ref{fig:CG_YEcorr}) -- direct-causal ($Y$ causing $\acorr$), confounding between $Y$ and $\acorr$ due to a common cause, or selection during the data-generating process. Overall, we define four kinds of shifts based on the causal graph: \independent, \causal, \confounded, and \selected\footnote{
Note that for \selected to satisfy the assumptions of Theorem~\ref{thm:xc-nec} implies that $\vecX_c$ is fully predictive of $Y$ or that the noise in $Y$-$\vecX_c$ relationship is independent of the features driving the selection process.}. While the canonical graph in Figure~\ref{fig:CG_main_nocorr} is general, resolving each dashed edge into a specific type of shift (causal mechanism) leads to a \textit{realized causal DAG} for a particular dataset.
As we shall see, knowledge of these shift types is sufficient to determine the correct independence constraints between observed variables. 

Our canonical multi-attribute graph generalizes the DG graph from \cite{pmlr-v139-mahajan21b} that considered an \independent domain/environment as the only attribute. Under the special case of a single attribute ($|\vecA|=1$), the canonical graph helps interpret the validity of popular DG methods for a dataset by considering the type of the attribute-label relationship in the data-generating process. For example, let us consider  two common constraints in prior work on independence between $\phi$ and a spurious attribute: \textit{unconditional} ($\phi(\vecx) \perp\!\!\!\perp A$)~\citep{veitch2021counterfactual,albuquerque2020generalizing,ganin2016dann} or \textit{conditional} on the label ($\phi(\vecx) \perp\!\!\!\perp A| Y$)~\citep{ghifary2016scatter,hu2019discriminant,li2018conddomaingen,li2018adversarialcada}. 
Under the canonical graph in Figure~\ref{fig:CG_main_nocorr}, the unconditional constraint is true when $A \perp\!\!\!\perp Y$ ($A \in \aind$) but not always for $\acorr$ (true only under \confounded shift). 
If the relationship is \causal or \selected, then the conditional constraint is correct. Critically, as \citet{veitch2021counterfactual} show for a single-attribute graph, the conditional constraint is not always better; it is an incorrect constraint (not satisfied by $\vecX_c$) under \confounded setting. 
Further, under the canonical graph [with $E$-$\vecX_c$ edge] from Figure~\ref{fig:CG_main_corr}, none of these constraints are valid due to a correlation path between $\vecX_c$ and $E$. This shows the importance of considering the generating process for a dataset. 

\textbf{Inferring attributes-label relationship type.} 
\label{sec:attr-avail}
 Whether an attribute belongs to $\acorr$ or $\aind$ can be learned from data (since $\aind \perp\!\!\!\perp Y$). Under some special conditions with the graph in Figure~\ref{fig:CG_main_nocorr}---assuming all attributes are observed and all attributes in $\acorr$ are of the same type---we can also identify the type of $\acorr$ shift: $Y \perp\!\!\!\perp E | \acorr$ implies \selected; if not, then  $\vecX \perp\!\!\!\perp E | \aind, \acorr, Y$ implies \causal, otherwise it is \confounded. In the general case of Figure~\ref{fig:CG_main_corr}, however, it is not possible to differentiate between $\acaus$, $\aconf$ and $\asel$ using observed data and needs manual input. Fortunately, unlike the full causal graph, the type of relationship between label and an attribute is easier to obtain.
 For example, in text toxicity classification, toxicity labels are found to be spuriously correlated with certain demographics ($\acorr)$~\citep{10.1145/3278721.3278729,Koh2021WILDSAB,park-etal-2018-reducing}; while in medical applications where data is collected from small number of hospitals, shifts arise due to different methods of slide staining and image acquisition ($\aind$)~\citep{Koh2021WILDSAB, KOMURA201834, TELLEZ2019101544}. Suppl.~\ref{app:attrs_in_datasets} contains additional real-world examples with attributes.

\vspace{-0.5em}
\subsection{Independence constraints depend on attribute$\leftrightarrow$label relationship}
\label{sec:derive_constraints_prop3.1}

We list the independence constraints between $\langle \vecX_c, \vecA, Y \rangle$ under the canonical graphs from Figure~\ref{fig:CG_main}, which can be used to derive the correct regularization constraints to be applied on $\phi$ (Theorem~\ref{thm:xc-nec}). 


\begin{restatable}{proposition}{indconstraints}\label{thm:ind-constraints}
Given a causal DAG realized by specifying the target-attributes relationship in Figure~\ref{fig:CG_main_nocorr}, 
the correct constraint depends on the relationship of label $Y$ with the attributes $\bm{A}$. As shown, $\bm{A}$ can be split into $\acorrinline$, $\aindinline$ and $E$, where $\acorrinline$ can be further split into subsets that have a causal ($\acausinline$), confounded ($\aconfinline$), selected  ($\aselinline$) relationship with $Y$ ($\acorrinline=\acausinline \cup \aconfinline \cup \aselinline$). Then, the (conditional) independence constraints  $\vecX_c$ should satisfy are,
\begin{enumerate}
    \item Independent: $X_c \perp\!\!\perp A_{ind}$; $X_c \perp\!\!\perp E$; $X_c \perp\!\!\perp A_{ind}|Y$; $X_c \perp\!\!\perp A_{ind}|E$; $X_c \perp\!\!\perp A_{ind}|Y, E$
    \item Causal: 
     $X_c \perp\!\!\perp A_{cause}|Y$; $X_c \perp\!\!\perp E$; $X_c \perp\!\!\perp A_{cause}|Y, E$
    \item Confounded: $X_c \perp\!\!\perp A_{conf}$; $X_c \perp\!\!\perp E$; $X_c \perp\!\!\perp A_{conf}|E$
    \item Selected: $X_c \perp\!\!\perp A_{sel}|Y$; $X_c \perp\!\!\perp A_{sel}|Y, E$
\end{enumerate}
\end{restatable}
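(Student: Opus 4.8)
The plan is to reduce the proposition to a mechanical application of \textit{d}-separation, since Theorem~\ref{thm:xc-nec} already guarantees that any conditional independence satisfied by $\vecX_c$ is necessary for a risk-invariant predictor; it therefore suffices to \emph{read off} the separations between $\vecX_c$ and the attribute/environment variables in each realized DAG. First I would fix the shared skeleton of Figure~\ref{fig:CG_main_nocorr}: $\vecX_c \rightarrow Y$, with $\vecX_c$, the attributes, and $E$ all pointing into the common effect $\vecX$ (and no $\vecX_c$–$E$ edge in this figure), while $E$ may additionally point into the attributes since the environment modulates their distribution. The crucial structural fact to record up front is that $\vecX$ is a \emph{collider} on every $\vecX_c$-to-attribute path routed through it, and none of the listed statements conditions on $\vecX$ or a descendant; hence every such path stays blocked throughout, and the analysis only has to track the extra path created by resolving the dashed $Y$–$\acorrinline$ edge according to Figure~\ref{fig:CG_YEcorr}.

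For the \independent and \causal realizations the verification is then routine. For $\aindinline$ the only routes from $\vecX_c$ pass through the collider $\vecX$ (or through the fork $\vecX \leftarrow E \rightarrow \aindinline$), so each listed statement holds regardless of whether we additionally condition on $Y$ and/or $E$ — conditioning on $E$ only closes the fork, never opens a collider. For $\acausinline$ the realization adds the directed path $\vecX_c \rightarrow Y \rightarrow \acausinline$, which is open unless $Y$ is in the conditioning set; this is exactly why the list retains only $\vecX_c \perp\!\!\perp \acausinline \mid Y$ and $\vecX_c \perp\!\!\perp \acausinline \mid Y,E$ (plus the collider-blocked $\vecX_c \perp\!\!\perp E$), while $\vecX_c \perp\!\!\perp \acausinline$ and $\vecX_c \perp\!\!\perp \acausinline \mid E$ are correctly \emph{absent}.

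The delicate cases — and the main obstacle — are the \confounded and \selected realizations, where the collider logic runs against naive intuition and I must argue both that the listed statements hold and that the omitted ones genuinely fail. For \confounded I would model the common cause as a latent (bidirected) edge $Y \leftrightarrow \aconfinline$ (so the confounder is not among the designated $\vecX_c$), making $Y$ a collider on $\vecX_c \rightarrow Y \leftrightarrow \aconfinline$: the marginal and $E$-conditioned separations hold because this collider stays closed, but conditioning on $Y$ \emph{opens} it, which is precisely why $\vecX_c \perp\!\!\perp \aconfinline \mid Y$ is excluded. For \selected the key observation is that the selection node $S$ (with $Y \rightarrow S \leftarrow \aselinline$) is \emph{implicitly conditioned on} throughout, since the dataset is the selected sample; this permanently open collider makes $\vecX_c$ and $\aselinline$ dependent unless the chain $\vecX_c \rightarrow Y \rightarrow S$ is blocked by conditioning on $Y$, yielding exactly $\vecX_c \perp\!\!\perp \aselinline \mid Y$ and $\vecX_c \perp\!\!\perp \aselinline \mid Y,E$. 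I would also pin down the one genuinely asymmetric omission: because $E$ feeds into $\aselinline$ (or into $S$), the path $\vecX_c \rightarrow Y \rightarrow S \leftarrow \cdots \leftarrow E$ stays open under selection, so $\vecX_c \perp\!\!\perp E$ \emph{fails} here and is correctly dropped — in contrast to the \causal case, where the analogous attribute is an unconditioned collider and $\vecX_c \perp\!\!\perp E$ survives.

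Finally, to make the enumeration tight rather than merely sound, I would sweep over all conditioning sets built from $\{Y, E\}$ (together with the implicit $S$ in the selected case) and confirm the list is exactly the set of separations that survive. The routine direction is pure path-tracing; the only real care needed is the collider bookkeeping around $Y$ in the confounded case and around $S$ in the selected case, plus checking that $E$'s incoming edges into the attributes never open a path while $\vecX$ remains unconditioned.
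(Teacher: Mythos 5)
Your proposal is correct and follows essentially the same route as the paper's own proof: a case-by-case application of \textit{d}-separation to the realized DAGs, with the same collider bookkeeping around $\vecX$ throughout, around $Y$ (via the latent confounder) in the \confounded case, and around the implicitly conditioned selection node $S$ in the \selected case. Your only addition is making explicit why the omitted constraints fail (in particular why $\vecX_c \perp\!\!\perp E$ is dropped under \selected), which the paper leaves implicit but which is consistent with its argument.
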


\begin{restatable}{corollary}{objEcorr}\label{thm:corr-new}
All the above derived constraints are valid for Graph~\ref{fig:CG_main_nocorr}. However, in the presence of a correlation between $E$ and $\vecX_c$ (Graph~\ref{fig:CG_main_corr}), only the constraints conditioned on $E$ hold true. 
\end{restatable}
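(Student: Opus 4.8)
The plan is to treat the corollary as two d-separation checks on the realized DAGs, reusing the machinery already set up for Proposition~\ref{thm:ind-constraints}. For the first claim there is essentially nothing new to do: the constraints in Proposition~\ref{thm:ind-constraints} were obtained precisely by reading off the d-separations of $\vecX_c$ from $\vecA$ and from $E$ in Graph~\ref{fig:CG_main_nocorr}, so their validity on that graph is immediate from the global Markov property. I would state this explicitly and push the substance of the proof onto Graph~\ref{fig:CG_main_corr}.

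The second claim is where the work lies. The only difference between the two graphs is the added $\vecX_c$--$E$ correlation, which I would model as an \emph{open} connection between $\vecX_c$ and $E$ (realized by a direct edge, a latent common cause, or a selection node, with the relationship-agnostic treatment deferred to Suppl.~\ref{app:rebuttal_ObjE}). The key structural observation is that $E$ is connected to each attribute family, since $E$ modulates $P(\vecA\mid Y)$ across environments. Combining this with the new $\vecX_c$--$E$ link produces, for every attribute $A$ among $\acorrinline,\aindinline$ and for $E$ itself, a path of the form $\vecX_c \cdots E \to A$ (respectively the link $\vecX_c \cdots E$) on which $E$ appears as a non-collider. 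I would then run the dichotomy constraint-by-constraint.

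For the constraints that condition on $E$ (namely $X_c \perp\!\!\!\perp A_{ind}\mid E$, $X_c \perp\!\!\!\perp A_{ind}\mid Y,E$, $X_c \perp\!\!\!\perp A_{caus}\mid Y,E$, $X_c \perp\!\!\!\perp A_{conf}\mid E$, and $X_c \perp\!\!\!\perp A_{sel}\mid Y,E$), conditioning on $E$ blocks the new path at the non-collider $E$; since every path already blocked in Graph~\ref{fig:CG_main_nocorr} remains blocked, d-separation, and hence the independence, is preserved. For the constraints that do not condition on $E$ (the unconditional and $Y$-only statements, including $X_c \perp\!\!\!\perp E$), the new path is active, so $\vecX_c$ is d-connected to the relevant attribute (or to $E$); the independence is no longer entailed by the graph, and exhibiting a distribution in $\mathcal{P}_\mathcal{G}$ realizing the correlation shows it genuinely fails. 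Together these yield ``only the $E$-conditioned constraints hold.''

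The main obstacle I anticipate is the preservation half for the confounding and selection realizations of the $\vecX_c$--$E$ correlation: there I must check that conditioning on $E$ does not \emph{itself} open a path by activating a collider (e.g.\ a latent common cause $U$ of $E$ and $\vecX_c$, or a selection node descended from a collider). This requires verifying that $E$ is neither a collider nor the descendant of an unconditioned collider on any $\vecX_c$-to-$A$ path, uniformly across the four shift mechanisms and the three ways the correlation can arise. This case analysis is exactly what Suppl.~\ref{app:rebuttal_ObjE} is set up to discharge, so I would lean on it to keep the main argument down to the single clean observation above.
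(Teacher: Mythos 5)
Your proposal is correct and follows essentially the same route as the paper: a d-separation check on the realized DAGs, where the $E$-conditioned constraints remain blocked at the non-collider $E$ while the unconditional and $Y$-only constraints fail along the newly opened $\vecX_c$--$E$ connection, with the relationship-agnostic (causal/confounded/selection) collider check deferred to Suppl.~\ref{app:rebuttal_ObjE}. If anything, you are more thorough than the paper, whose proof only exhibits the failing constraints in each realized DAG and leaves the preservation half (that conditioning on $E$ cannot itself activate a collider) implicit in that supplementary remark.
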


Corollary~\ref{thm:corr-new} implies that if we are not sure about $E$-$\vecX_c$ correlation,  $E$-\textit{conditioned} constraints should be used.  
By considering independence constraints over \textit{attributes} that may represent any observed variable, our graph-based characterization unites the single-domain (group-wise)~\citep{Sagawa*2020Distributionally} and multi-domain generalization tasks. Whether attributes represent auxiliary attributes, group indicators, or data sources, 
Proposition~\ref{thm:ind-constraints} provides the correct regularization constraint.

\subsection{A fixed conditional independence constraint cannot work for all shifts}
\label{sec: fixed_constraint_failure}
Combining with Theorem~\ref{thm:xc-nec}, Proposition~\ref{thm:ind-constraints} shows that the necessary constraints for a risk-invariant predictor's representation $\phi(\vecX)$ are different for different types of attributes. This leads us to our key result: under multi-attribute shifts, a single (conditional) independence constraint cannot be valid for all kinds of shifts. Remarkably, this result is true even for single-attribute shifts:
any algorithm with a fixed conditional independence constraint (e.g., as listed in Table~\ref{table:baselines_constraints_main}~\citep{JMLR:v13:gretton12a, https://doi.org/10.48550/arxiv.1907.02893,8578664, 10.1007/978-3-319-49409-8_35}) cannot work for all datasets.

\begin{restatable}{theorem}{fixedcond} \label{thm:fixed-cond}
Under the canonical causal graph in Figure~\ref{fig:CG_main}(a,b), there exists no (conditional) independence constraint over $\langle \vecX_c, \vecA, Y \rangle$ that is valid for all realized DAGs as the type of multi-attribute shifts vary. Hence, for any predictor algorithm for $Y$ that uses a single  (conditional) independence constraint over its representation $\phi(\vecX)$, $\vecA$ and $Y$, there exists a realized DAG $\mathcal{G}$ and a corresponding training dataset such that the learned predictor cannot be a risk-invariant predictor for  distributions in  $\mathcal{P}_\mathcal{G}$, where $\mathcal{P}_\mathcal{G}$ is the set of distributions obtained by changing $P(\vecA|Y)$.  
\end{restatable}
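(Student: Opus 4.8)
The plan is to establish the two assertions in sequence: first, that no single (conditional) independence statement over $\langle \vecX_c, \vecA, Y \rangle$ holds in every realized DAG, and second, that this graphical obstruction forces any algorithm committed to one such constraint to fail on a suitably chosen realized DAG and training family. For the first assertion I would avoid quantifying over the whole space of candidate constraints directly and instead exhibit a decisive pair of realizations of Figure~\ref{fig:CG_main_nocorr} that already differ on every constraint touching a correlated attribute. Fix a single attribute $A \in \acorrinline$ and compare the realization in which $A$ has a \causal relation to $Y$ with the one in which it has a \confounded relation. Reading off Proposition~\ref{thm:ind-constraints}, the constraints that $\vecX_c$ satisfies are $\{\vecX_c \perp\!\!\!\perp A \mid Y,\ \vecX_c \perp\!\!\!\perp A \mid Y,E\}$ in the \causal case (the unconditional forms fail along the open directed path $\vecX_c \rightarrow Y \rightarrow A$) and $\{\vecX_c \perp\!\!\!\perp A,\ \vecX_c \perp\!\!\!\perp A \mid E\}$ in the \confounded case (the $Y$-conditioned forms fail because $Y$ is a collider on the confounding path between $\vecX_c$ and $A$). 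These two sets are disjoint: conditioning on $Y$ is mandatory in the first and fatal in the second. The \selected case lines up with the \causal side, and \independent with the \confounded side, so the split between $Y$-conditioned and $Y$-free constraints is exactly the fault line.

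Next I would close the remaining gaps so that ``no universal constraint'' is complete. A candidate that references only $E$, such as $\vecX_c \perp\!\!\!\perp E$, or any non-$E$-conditioned statement, is handled by contrasting Figure~\ref{fig:CG_main_nocorr} with Figure~\ref{fig:CG_main_corr}: by Corollary~\ref{thm:corr-new}, once an $E$--$\vecX_c$ correlation is present only the $E$-conditioned constraints survive, so every statement that relied on the absence of that edge is violated in the realization of Figure~\ref{fig:CG_main_corr}. Together with the incompatibility between the \causal and \confounded realizations above, this yields: for every candidate single constraint $C$ there is a realized DAG in which $\vecX_c$ does not satisfy $C$, which is the first assertion. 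I would note that this already holds with $|\vecA| = 1$, recovering the promised single-attribute statement. For the second assertion I would fix an arbitrary algorithm that enforces one constraint $C$ on its representation $\phi(\vecX)$, select (via the first assertion) a realized DAG $\mathcal{G}$ in which $\vecX_c$ violates $C$, and take $\mathcal{P}_\mathcal{G}$ to be the family obtained by varying $P(\vecA \mid Y)$. Theorem~\ref{thm:xc-nec} supplies the bridge to risk: the constraints satisfied by $\vecX_c$ in $\mathcal{G}$ are necessary for any risk-invariant predictor, so an optimal risk-invariant predictor's representation must reproduce the dependence profile of $\vecX_c$ on $(\vecA, Y)$. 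Since $C$ demands an independence that $\vecX_c$ does not have, the feasible set $\{\phi : \phi \text{ satisfies } C\}$ contains no such representation: any feasible $\phi$ suppresses a dependence that in $\mathcal{G}$ carries label information (for instance, forcing $\phi \perp\!\!\!\perp A$ under the \causal $\mathcal{G}$, where $A$ is a $Y$-driven proxy, strips $\phi$ of predictive content). Hence the learned predictor cannot be an optimal risk-invariant predictor over $\mathcal{P}_\mathcal{G}$, and one can exhibit a concrete $P' \in \mathcal{P}_\mathcal{G}$ witnessing the excess risk.

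The step I expect to be the main obstacle is this last one: upgrading the graphical fact ``$C$ is not satisfied by $\vecX_c$'' into a genuine failure of the \emph{optimizing} algorithm. Proposition~\ref{thm:ind-constraints} only certifies which $d$-separations hold, and Theorem~\ref{thm:xc-nec} is stated as a necessity condition on predictors; neither directly rules out the possibility that some $\phi$ in the constrained feasible set is nonetheless optimal and risk-invariant through a different route. To make the argument airtight I would show that enforcing an independence absent from $\vecX_c$ necessarily removes information on which $P(Y \mid \vecX_c)$ depends, so the best feasible predictor's risk lies strictly above the optimal invariant risk attained by $\vecX_c$; and, on the complementary reading where $C$ is too weak to control the attribute, that the retained component's relation to $Y$ shifts with $P(\vecA \mid Y)$, so risk is non-constant across $\mathcal{P}_\mathcal{G}$. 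Establishing these two quantitative consequences, rather than the d-separation bookkeeping, is where the real work sits.
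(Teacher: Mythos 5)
Your proposal is correct and takes essentially the same route as the paper's own proof: the first claim is established exactly as you do, by reading off Proposition~\ref{thm:ind-constraints} that no single constraint survives all shift types (the paper merely asserts this observation, while you make it concrete with the \causal/\confounded pair plus the Figure~\ref{fig:CG_main_corr} argument for $E$-only constraints), and the second claim is exactly your bridge through Theorem~\ref{thm:xc-nec}. The ``main obstacle'' you flag at the end is a genuine subtlety, but the paper does not resolve it either --- its proof stops at the \textit{d}-separation bookkeeping and implicitly assumes that a predictor built on an invalid constraint thereby violates the valid one, so the quantitative strengthening you sketch would go beyond, rather than reconstruct, the published argument.
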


\begin{restatable}{corollary}{fixedcondsingle} \label{cor:fixed-cond-single}
Even when $|\vecA|=1$, an algorithm using a single independence constraint over $\langle \phi(\vecX), A, Y \rangle$ cannot yield a risk-invariant predictor for all kinds of single-attribute  shift datasets.
\end{restatable}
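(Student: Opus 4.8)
The plan is to obtain Corollary~\ref{cor:fixed-cond-single} as the single-attribute instance of Theorem~\ref{thm:fixed-cond}, but to carry the argument out directly for $|\vecA|=1$ since this case exposes the combinatorial core most transparently. With one attribute $A$, every candidate regularizer is a conditional independence statement over $\langle \phi(\vecX), A, Y\rangle$. I would first argue that, up to degenerate constraints, only two are sensible design choices: the unconditional $\phi \perp\!\!\!\perp A$ (the MMD/DANN/CORAL family) and the label-conditional $\phi \perp\!\!\!\perp A \mid Y$ (the C-MMD/CDANN family). Any constraint of the form $\phi \perp\!\!\!\perp Y$ or $\phi \perp\!\!\!\perp Y \mid A$ forces the representation to discard label information and so cannot yield an optimal predictor, while constraints not involving $\phi$ (e.g. $A \perp\!\!\!\perp Y$) impose nothing on the learned representation. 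Hence it suffices to rule out the two nontrivial constraints.

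Second, I would instantiate Proposition~\ref{thm:ind-constraints} at $|\vecA|=1$ to tabulate, for each of the four realized shift types, which of these two constraints the true causal features $\vecX_c$ actually satisfy. Reading off the four cases, with $A$ playing the role of $\aindinline$, $\acausinline$, $\aconfinline$, or $\aselinline$: under \independent both constraints hold; under \causal and \selected only $\vecX_c \perp\!\!\!\perp A \mid Y$ holds while $\vecX_c \not\perp\!\!\!\perp A$ (the $Y\!\to\!A$ edge or the selection collider opens a marginal dependence); under \confounded only $\vecX_c \perp\!\!\!\perp A$ holds while $\vecX_c \not\perp\!\!\!\perp A \mid Y$ (conditioning on the collider $Y$ opens the $Y\!\leftarrow\!C\!\to\!A$ path). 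The intersection of the valid sets over all four shift types is therefore empty: the unconditional constraint is invalid under \causal (and \selected), and the conditional constraint is invalid under \confounded. This already shows that no single constraint is simultaneously satisfied by $\vecX_c$ across all shift types.

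Third, I would convert ``$C$ not satisfied by $\vecX_c$'' into ``the learned predictor is not an optimal risk-invariant predictor.'' Fix the chosen constraint $C$ and, by the previous step, pick a shift type whose realized DAG $\mathcal{G}$ has $\vecX_c$ violating $C$. The optimal invariant predictor over $\mathcal{P}_\mathcal{G}$ predicts through $\vecX_c$, whose $P(Y\mid\vecX_c)$ is invariant by construction; under this shift the dependence that $C$ forbids is precisely the one through which $\vecX_c$ carries its $Y$-predictive signal, so any $\phi$ forced to obey $C$ must discard part of that signal. I would make this precise by exhibiting an explicit $P'\in\mathcal{P}_\mathcal{G}$ (obtained by varying $P(A\mid Y)$) on which the $C$-constrained optimum incurs strictly higher risk than on the training distributions; Theorem~\ref{thm:xc-nec} then certifies it is not risk-invariant. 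Repeating with the complementary shift type disposes of the other constraint, completing the case analysis and the corollary.

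The main obstacle is this third step: the emptiness of the intersection is easy, but rigorously showing that \emph{enforcing} a constraint $\vecX_c$ does not satisfy necessarily precludes an optimal risk-invariant predictor is delicate. I must rule out the possibility that some alternative representation $\phi' \neq \vecX_c$ both obeys $C$ and remains optimally invariant, rather than merely noting that $\phi=\vecX_c$ is excluded. The cleanest route is to show that every optimal risk-invariant predictor over $\mathcal{P}_\mathcal{G}$ is informationally equivalent to $P(Y\mid\vecX_c)$ and hence shares $\vecX_c$'s conditional-independence profile relative to $(A,Y)$, so that violating $C$ is forced; the witness distribution $P'$ then quantifies the resulting risk gap. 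Pinning down this equivalence and constructing the explicit $P'$ is where the real work lies.
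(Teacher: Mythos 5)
Your proposal follows the same route as the paper's proof: instantiate Proposition~\ref{thm:ind-constraints} for a single attribute, observe that the intersection of valid constraints across the four realized shift types is empty, then for any chosen constraint pick a shift type whose realized DAG $\mathcal{G}$ has $\vecX_c$ violating it, and invoke Theorem~\ref{thm:xc-nec} to conclude failure of risk-invariance. However, your reduction step has a concrete hole. Over the triple $\langle \phi(\vecX), A, Y\rangle$ there are six conditional-independence statements; your dismissals cover $\phi(\vecX) \perp\!\!\!\perp Y$ and $\phi(\vecX) \perp\!\!\!\perp Y \mid A$ (label-destroying) and $A \perp\!\!\!\perp Y$ (vacuous for $\phi$), but omit $Y \perp\!\!\!\perp A \mid \phi(\vecX)$ --- the IRM/VREx-family constraint, listed in the paper's Table~\ref{table:baselines_constraints_main} and one of the principal targets of the corollary. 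It is neither vacuous nor label-destroying, so ``it suffices to rule out the two nontrivial constraints'' is false as stated. The repair is in the same spirit as your second step: under a \causal realization the direct edge $Y \rightarrow A$ (and under \selected, the always-conditioned selection collider) means no conditioning set avoiding $Y$ and $A$ can d-separate them, so this constraint also fails on some realized DAG. The paper avoids this enumeration hazard altogether by arguing for an arbitrary fixed constraint $\psi$: Proposition~\ref{thm:ind-constraints} always supplies a realized DAG on which $\vecX_c$ violates $\psi$ while satisfying a different constraint on the same attribute.

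On your third step --- the one you call the real work --- the difficulty you isolate (a representation $\phi'$ obeying the enforced constraint might nonetheless satisfy every constraint $\vecX_c$ satisfies, so excluding $\phi = \vecX_c$ alone is not enough) is genuine, but you should know the paper does not do the work you outline. Its proof of Theorem~\ref{thm:fixed-cond} and of this corollary is exactly the short appeal: under the adversarial DAG the predictor ``does not satisfy'' a constraint that $\vecX_c$ satisfies, hence Theorem~\ref{thm:xc-nec} applies; no explicit witness distribution beyond the pair constructed inside the proof of Theorem~\ref{thm:xc-nec}, and no informational-equivalence lemma, appears anywhere. So your plan is strictly more demanding than the paper's own argument: to merely match the paper you can stop at the same appeal to Theorem~\ref{thm:xc-nec}, whereas actually proving the equivalence of optimal risk-invariant predictors with $P(Y \mid \vecX_c)$ and exhibiting the witness $P'$ would make rigorous a step that both you and the paper currently leave soft.
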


Corollary~\ref{cor:fixed-cond-single} adds theoretical evidence for past empirical demonstrations of inconsistent performance of DG algorithms~\citep{wiles2022a,ye2022ood}.
 To demonstrate its significance,  we provide OoD generalization results on a simple ``slab'' setup~\citep{shah2020pitfalls} with three datasets (\causal, \confounded, and \selected shifts) in Suppl.~\ref{app:synthetic-corr}.  We evaluate two constraints motivated by DG literature~\cite{pmlr-v139-mahajan21b}: unconditional $\vecX_c \perp\!\!\perp \vecA|E$, and conditional on label $\vecX_c \perp\!\!\perp \vecA|Y, E$.  As predicted by Corollary~\ref{cor:fixed-cond-single}, neither constraint obtains best accuracy on all three datasets (Table~\ref{app:risk-inv}). 

\section{Causally Adaptive Constraint Minimization (\cacm)}
\label{sec:cacm}

Motivated by Sec.~\ref{sec:approach}, we present \cacm,  an algorithm that adaptively chooses  regularizing constraints for multi-attribute shift datasets (full algorithm for any general DAG in Suppl.~\ref{app:cacm_algo}). It has two phases.
 
\textbf{Phase I. Derive correct independence constraints.}
If a dataset's  DGP satisfies the canonical graph, \cacm requires a user to specify the relationship type for each attribute and uses the constraints from Proposition~\ref{thm:ind-constraints}. 
For other datasets, \cacm requires a causal graph describing the dataset's DGP 
and  uses the following  steps to derive the independence constraints. 
Let $\mathcal{V}$ be the set of observed variables in the graph except $Y$, and  $\mathcal{C}$ be the list of constraints.
\begin{enumerate}[leftmargin=*,topsep=0pt]{}
    \item For each observed variable $V \in \mathcal{V}$, check whether $(\vecX_c, V)$ are \textit{d}-separated. Add $\vecX_c \perp\!\!\!\perp V$ to $\mathcal{C}$.
    \item If not, check if $(\vecX_c, V)$ are \textit{d}-separated conditioned on any subset $Z$ of the remaining observed variables in $\mathcal{Z}= \{Y\} \cup \mathcal{V} \setminus \{V\}$. For each subset $Z$ with \textit{d}-separation, add $\vecX_c \perp\!\!\!\perp V| Z$ to $\mathcal{C}$. 
\end{enumerate}

\textbf{Phase II. Apply regularization penalty using derived constraints.}
In Phase II, \cacm applies those constraints as a regularizer to the standard ERM loss, 
$    g_1,\phi = \arg \min_{g_1,\phi}; \qquad \ell(g_1(\phi(\vecx)), y) + RegPenalty$,
where $\ell$ is cross-entropy loss. The regularizer optimizes for valid constraints over all observed variables  $V \in \mathcal{V}$. Below we provide the regularizer term for datasets following the canonical graphs from Figure \ref{fig:CG_main} ($\mathcal{V}=\vecA$). We choose Maximum Mean Discrepancy ($\mmd$)~\citep{JMLR:v13:gretton12a} to apply our penalty (in principle, any metric for conditional independence would work).

Since $\vecA$ includes multiple attributes, the regularizer penalty depends on the type of distribution shift for each attribute. 
For instance, for $A \in \aind$ (\independent), to enforce $\phi(\vecx) \perp\!\!\perp A$, we aim to minimize the distributional discrepancy between
$P(\phi(\vecx)|A=a_i)$ and $P(\phi(\vecx)|A=a_j)$, for all $i, j$ values of $A$. However, since the same constraint is applicable on $E$, it is statistically efficient to apply the constraint on $E$ (if available) as there may be multiple closely related values of $A$ in a domain (e.g., slide stains collected from one hospital may be spread over similar colors, but not exactly the same). Hence, we apply the constraint on distributions $P(\phi(\vecx)|E=E_i)$ and $P(\phi(\vecx)|E=E_j)$ if $E$ is observed (and 
$A$ may/may not be unobserved), otherwise we apply the constraint over $A$.
\begin{align}
\label{eqn_ind}
\small
 RegPenalty_{\aind} = \sum_{i=1}^{|\aind|} \sum_{j>i} \mmd(P(\phi(\vecx)| a_{i, ind}), P(\phi(\vecx)| a_{j, ind})) 
\end{align}
For $A \in \acaus$ (\causal), following Proposition~\ref{thm:ind-constraints}, we consider distributions $P(\phi(\vecx)|A=a_i, Y=y)$ and $P(\phi(\vecx)|A=a_j, Y=y)$. We additionally condition on $E$ as there may be a correlation between $E$ and $\vecX_c$ (Figure~\ref{fig:CG_main_corr}), which renders other constraints incorrect (Corollary~\ref{thm:corr-new}). We similarly obtain regularization terms for \confounded and \selected (Suppl.~\ref{app:cacm_algo}). 
\begin{align*}
\label{eqn_caus}
\small
 RegPenalty_{\acaus} = \sum_{|E|} \sum_{y \in Y} \sum_{i=1}^{|\acaus|} \sum_{j>i} \mmd(P(\phi(\vecx)| a_{i, cause}, y), P(\phi(\vecx)| a_{j, cause}, y)) 
\end{align*}
The final $RegPenalty$ is a sum of penalties over all attributes, $RegPenalty = \sum_{A \in \vecA} \lambda_APenalty_A$, where $\lambda_A$ are the hyperparameters. 
Unlike prior work~\citep{pmlr-v151-makar22a,veitch2021counterfactual}, we do not restrict ourselves to binary-valued attributes and classes.

\textbf{\cacm's relationship with existing DG algorithms. } 
Table~\ref{table:baselines_constraints_main} shows common constraints used by popular DG algorithms. 
\cacm's strength lies in \textit{adaptively} selecting constraints based on the causal relationships in the DGP. Thus, depending on the dataset, applying \cacm for a single-attribute shift may involve applying the same constraint as in MMD or C-MMD algorithms. For example, in Rotated-MNIST dataset with $E = A_{ind} = rotation$,
the effective constraint for  MMD, DANN, CORAL algorithms ($\phi \perp\!\!\perp E$) is the same as \cacm's constraint $\phi \perp\!\!\perp A_{ind}$ for \independent shift.

\section{Empirical Evaluation}
\label{sec: expts}

We perform experiments on MNIST, small NORB, and Waterbirds datasets to demonstrate our main claims: existing DG algorithms perform worse on multi-attribute shifts; \cacm with the correct graph-based constraints significantly outperforms these algorithms; and incorrect constraints cannot match the above accuracy. While we provide constraints for all shifts in Proposition~\ref{thm:ind-constraints}, our empirical experiments with datasets focus on commonly occurring \causal and \independent shifts. 
All experiments are performed in PyTorch 1.10 with NVIDIA Tesla P40 and P100 GPUs, and building on DomainBed~\citep{Gulrajani2021InSO} and OoD-Bench~\citep{ye2022ood}. Regularizing on model's logit scores provides  better accuracy than  $\phi(\vecx)$; hence we adopt it for all our experiments.

\subsection{Datasets \& Baseline DG algorithms}
We introduce three new datasets for the multi-attribute shift problem. For all datasets, details of environments, architectures, visualizations, and setup generation are in Suppl.~\ref{app:datasets}.
%



\noindent {\bf MNIST.} 
Colored~\citep{https://doi.org/10.48550/arxiv.1907.02893} and Rotated MNIST~\citep{7410650} present \causal ($\acaus = color$) and \independent ($\aind = rotation$) distribution shifts, respectively.
We combine these to obtain a multi-attribute dataset with $\acaus$ and $\aind$ ($col+rot$). For comparison, we also evaluate on single-attribute $\acaus$ (Colored) and $\aind$ (Rotated) MNIST datasets.

\noindent {\bf small NORB}~\citep{1315150}. 
This dataset was used by \cite{wiles2022a} to create a challenging DG task with single-attribute shifts,  having multi-valued classes and attributes over realistic 3D objects. 
We create a  multi-attribute shift dataset ($light$+$azi$), consisting of a causal connection, $\acaus$=$lighting$, between lighting and object category $Y$; and $\aind$=$azimuth$ that varies independently across domains. We also evaluate on single-attribute $\acaus$ ($lighting$) and $\aind$ ($azimuth$) datasets.



\noindent {\bf Waterbirds.} {We use the original dataset~\citep{Sagawa*2020Distributionally} where bird type (water or land) ($Y$) is spuriously correlated with background ($\acaus$). To create a multi-attribute setup, we add different weather effects ($\aind$) to train and test data with probability $p$ = 0.5 and 1.0 respectively.} 

\textbf{Baseline DG algorithms \& implementation. }We consider baseline algorithms optimizing for different constraints and statistics to compare to  causal adaptive regularization:  IRM~\citep{https://doi.org/10.48550/arxiv.1907.02893}, IB-ERM and IB-IRM~\citep{ahuja2021invariance}, VREx~\citep{pmlr-v139-krueger21a}, MMD~\citep{8578664}, CORAL~\citep{10.1007/978-3-319-49409-8_35}, DANN~\citep{JMLR:v13:gretton12a}, Conditional-MMD (C-MMD)~\citep{8578664}, Conditional-DANN (CDANN)~\citep{li2018adversarialcada}, GroupDRO~\citep{Sagawa*2020Distributionally}, Mixup~\citep{Yan2020ImproveUD}, MLDG~\citep{Li_Yang_Song_Hospedales_2018}, SagNet~\citep{Nam2021ReducingDG}, and RSC~\citep{Huang2020SelfChallengingIC}. Following  DomainBed~\citep{Gulrajani2021InSO}, a random search is performed 20 times over the hyperparameter distribution for 3 seeds. The best models obtained across the three seeds are used to compute the mean and standard error. We use a validation set that follows the test domain distribution consistent with previous work on these datasets~\citep{https://doi.org/10.48550/arxiv.1907.02893, Sagawa*2020Distributionally,wiles2022a,ye2022ood}. Further details are in Suppl.~\ref{app:exp_details}.

\subsection{Results}
\label{sec:results}

\begin{table*}[tb]
\caption{
{\bf Colored + Rotated MNIST:} Accuracy on
unseen domain for singe-attribute ($color, rotation$)
and multi-attribute ($col+rot$) distribution shifts; 
{\bf small NORB:} Accuracy on unseen domain for single-attribute ($lighting, azimuth$) and
multi-attribute ($light+azi$) distribution shifts.
{\bf Waterbirds:} Worst-group accuracy on
unseen domain for single- and multi-attribute shifts.
}
\label{table:results_mnist_main}
\label{table:results_norb_main}
\label{table:results_waterbirds_main}
\small
\centering
\setlength {\tabcolsep}{5pt}
\begin{tabular*}{\textwidth}{l @{\extracolsep{\fill}} |rrr|rrr|rr}
\toprule
\multicolumn{1}{c}{\multirow{2}{*}{Algo}} & \multicolumn{3}{c}{\begin{tabular}[c]{@{}c@{}}{\bf Colored+Rotated MNIST}\\ Accuracy\end{tabular}} & \multicolumn{3}{c}{\begin{tabular}[c]{@{}c@{}}{\bf small NORB}\\ Accuracy\end{tabular}} & \multicolumn{2}{c}{\begin{tabular}[c]{@{}c@{}}{\bf Waterbirds}\\ Worst-group accuracy\end{tabular}} \\ \cmidrule(r{0.7em}){2-9}
\multicolumn{1}{c}{}                      & $color$                       & $rotation$                      & $col$+$rot$                      & $lighting$                    & $azimuth$                  & $light$+$azi$            & original                   & multi-attr                                     \\
\midrule 
ERM 		& 30.9 \textpm 1.6 & 61.9 \textpm 0.5 & 25.2 \textpm 1.3	& 65.5 \textpm 0.7 & 78.6 \textpm 0.7 & 64.0 \textpm 1.2	& 66.0 $\pm$ 3.7 & 37.0 $\pm$ 1.1	\\
IB-ERM 		& 27.8 \textpm 0.7 & 62.1 \textpm 0.8 & 41.2 \textpm 4.1	& 66.0 \textpm 0.9 & 75.9 \textpm 1.2 & 61.2 \textpm 0.1	& 66.9 $\pm$ 4.6 & 40.8 $\pm$ 5.6	\\
IRM 		& 50.0 \textpm 0.1 & 61.2 \textpm 0.3 & 39.6 \textpm 6.7	& 66.7 \textpm 1.5 & 75.7 \textpm 0.4 & 61.7 \textpm 0.5	& 61.2 $\pm$ 5.2 & 37.7 $\pm$ 1.7	\\
IB-IRM 		& 49.9 \textpm 0.1 & 61.4 \textpm 0.9 & 49.3 \textpm 0.3	& 64.7 \textpm 0.8 & 77.6 \textpm 0.3 & 62.2 \textpm 1.2	& 62.3 $\pm$ 7.7 & 46.9 $\pm$ 6.5	\\
VREx 		& 30.3 \textpm 1.6 & 62.1 \textpm 0.4 & 23.3 \textpm 0.4	& 64.7 \textpm 1.0 & 77.6 \textpm 0.5 & 62.5 \textpm 1.6	& 68.8 $\pm$ 2.5 & 38.1 $\pm$ 2.3	\\
MMD 		& 29.7 \textpm 1.8 & 62.2 \textpm 0.5 & 24.1 \textpm 0.6	& 66.6 \textpm 1.6 & 76.7 \textpm 1.1 & 62.5 \textpm 0.3	& 68.1 $\pm$ 4.4 & 45.2 $\pm$ 2.4	\\
CORAL 		& 28.5 \textpm 0.8 & \textbf{62.5 \textpm 0.7} & 23.5 \textpm 1.1	& 64.7 \textpm 0.5 & 77.2 \textpm 0.7 & 62.9 \textpm 0.3	& 73.6 $\pm$ 4.8 & 54.1 $\pm$ 3.0	\\
DANN 		& 20.7 \textpm 0.8 & 61.9 \textpm 0.7 & 32.0 \textpm 7.8	& 64.6 \textpm 1.4 & 78.6 \textpm 0.7 & 60.8 \textpm 0.7	& 78.5 $\pm$ 1.8 & 55.5 $\pm$ 4.6	\\
C-MMD 		& 29.4 \textpm 0.2 & 62.3 \textpm 0.4 & 32.2 \textpm 7.0	& 65.8 \textpm 0.8 & 76.9 \textpm 1.0 & 61.0 \textpm 0.9	& 77.0 $\pm$ 1.2 & 52.3 $\pm$ 1.9	\\
CDANN 		& 30.8 \textpm 8.0 & 61.8 \textpm 0.2 & 32.2 \textpm 7.0	& 64.9 \textpm 0.5 & 77.3 \textpm 0.3 & 60.8 \textpm 0.9	& 69.9 $\pm$ 3.3 & 49.7 $\pm$ 3.9	\\
DRO	& 33.9 \textpm 0.4 & 60.6 \textpm 0.9 & 25.3 \textpm 0.5	& 65.5 \textpm 0.7 & 77.1 \textpm 1.0 & 62.3 \textpm 0.6	& 70.4 $\pm$ 1.3 & 53.1 $\pm$ 2.2	\\
Mixup 		& 25.1 \textpm 1.2 & 61.4 \textpm 0.6 & 21.1 \textpm 1.6	& 66.2 \textpm 1.3 & 80.4 \textpm 0.5 & 57.1 \textpm 1.5	& 74.2 $\pm$ 3.9 & 64.7 $\pm$ 2.4	\\
MLDG 		& 31.0 \textpm 0.3 & 61.6 \textpm 0.8 & 24.4 \textpm 0.7	& 66.0 \textpm 0.7 & 77.9 \textpm 0.5 & 64.2 \textpm 0.6	& 70.8 $\pm$ 1.5 & 34.5 $\pm$ 1.7	\\
SagNet 		& 28.2 \textpm 0.8 & 60.7 \textpm 0.7 & 23.7 \textpm 0.2	& 65.9 \textpm 1.5 & 76.1 \textpm 0.4 & 62.2 \textpm 0.5	& 69.1 $\pm$ 1.0 & 40.6 $\pm$ 7.1	\\
RSC 		& 29.1 \textpm 1.9 & 62.3 \textpm 0.4 & 22.8 \textpm 0.3	& 62.4 \textpm 0.4 & 75.6 \textpm 0.6 & 61.8 \textpm 1.3	& 64.6 $\pm$ 6.5 & 40.9 $\pm$ 3.6	\\

\midrule

\cacm 		& \textbf{70.4 \textpm 0.5} & 62.4 \textpm 0.4 & \textbf{54.1 \textpm 1.3} & \textbf{85.4 \textpm 0.5}  & \textbf{80.5 \textpm 0.6}	& \textbf{69.6 \textpm 1.6} & \textbf{84.5 $\pm$ 0.6}	& \textbf{70.5 $\pm$ 1.1} \\

\bottomrule
\end{tabular*}

\end{table*}

\noindent {\bf Correct constraint derived from the causal graph matters.}
Table \ref{table:results_mnist_main} shows the accuracy on test domain for all datasets. Comparing the three prediction tasks for MNIST and small NORB,  for all algorithms, accuracy on unseen test domain is highest under $\aind$ shift and lowest under two-attribute shift ($\aind \cup \acaus$), 
reflecting the difficulty of a multi-attribute distribution shift.  On the two-attribute shift task in MNIST, all DG algorithms obtain less than 50\% accuracy whereas \cacm obtains a 5\% absolute improvement. Results on small NORB dataset are similar: \cacm obtains 69.6\% accuracy on the two-attribute task while the nearest baseline is MLDG at 64.2\%. 

\begin{table}[tb]

\begin{minipage}[t]{.49\linewidth}
\centering
    
    \centering
    \caption{small NORB \causal shift. Comparing $\vecX_c \perp\!\!\perp \acaus|Y, E$ with incorrect constraints.}
     \label{table:smallnorb_corrlighting_compare_constraints}
    \small
    \setlength {\tabcolsep}{5pt}
    \begin{tabular}{lc}
        \toprule
        \textbf{Constraint} & \textbf{Accuracy} \\
        \midrule
         $\vecX_c \perp\!\!\perp \acaus$ & 72.7 $\pm$ 1.1 \\
         $\vecX_c \perp\!\!\perp \acaus|E$ & 76.2 $\pm$ 0.9 \\
          $\vecX_c \perp\!\!\perp \acaus|Y$ & 79.7 $\pm$ 0.9\\
          \midrule
        $\vecX_c \perp\!\!\perp \acaus|Y, E$ & \textbf{85.4 $\pm$ 0.5}\\
        \bottomrule
    \end{tabular}
    \vspace{-3mm}
\end{minipage} \hfill
\begin{minipage}[t]{.49\linewidth}
    \centering
    \caption{Comparing $\vecX_c \perp\!\!\perp \acaus|Y, E$ and $\vecX_c \perp\!\!\perp \acaus|Y$ for \causal shift in MNIST and small NORB. The constraint implied by $E$-$\vecX_c$ correlation (Fig.~\ref{fig:CG_main_corr}, Prop.~\ref{thm:ind-constraints}) affects accuracy.}
    \label{table:smallnorb_mnist_compare_constraints}
  \small
  \setlength {\tabcolsep}{5pt}
  \begin{tabular}{lcc}
    \toprule
    \textbf{Constraint} & \textbf{MNIST} & \textbf{small NORB} \\
    \midrule
     $\vecX_c \perp\!\!\perp \acaus|Y$ & 69.7 $\pm$ 0.2 & 79.7 $\pm$ 0.9\\
    $\vecX_c \perp\!\!\perp \acaus|Y, E$ & 70.4 $\pm$ 0.5 & 85.4 $\pm$ 0.5\\
    \bottomrule
  \end{tabular}
  \vspace{-3mm}
\end{minipage}
\end{table}


On both MNIST and small NORB, \cacm also obtains  highest accuracy on the $\acaus$ task. On MNIST, even though IRM and VREx have been originally evaluated for the Color-only ($\acaus$) task, under an extensive hyperparameter sweep as recommended in past work~\citep{Gulrajani2021InSO,pmlr-v139-krueger21a, ye2022ood}, we find that \cacm achieves a substantially higher accuracy (70\%) than these methods, just 5 units lower than the optimal 75\%.
While the $\aind$ task is relatively easier, algorithms optimizing for the correct constraint achieve highest accuracy. Note that MMD, CORAL, DANN, and \cacm are based on the same independence constraint (see Table~\ref{table:baselines_constraints_main}). As mentioned in Section~\ref{sec:cacm}, we use the the domain attribute $E$ for \cacm's regularization constraint for $\aind$ task, for full comparability with other algorithms that also use $E$. 
The results indicate the importance of adaptive regularization 
for generalization.

Table~\ref{table:results_waterbirds_main} also shows the OoD accuracy of  algorithms on the original Waterbirds dataset~\citep{Sagawa*2020Distributionally} and its multi-attribute shift variant. Here we evaluate using worst-group accuracy consistent with past work~\citep{Sagawa*2020Distributionally, yao2022improving}. We observe that on single-attribute ($\acaus$) as well as multi-attribute shift, \cacm significantly outperforms baselines ($\sim$6\% absolute improvement).


\noindent {\bf Incorrect constraints hurt generalization.}
We now directly compare the effect of using correct versus \textit{incorrect} (but commonly used) constraints for a dataset. 
To isolate the effect of a single constraint, we first consider the single-attribute shift on  $\acaus$  and compare the application of different regularizer constraints. Proposition~\ref{thm:ind-constraints} provides the correct constraint for $\acaus$: $\vecX_c \perp\!\!\perp \acaus|Y, E$. In addition, using \textit{d}-separation on \causal-realized DAG from Figure~\ref{fig:CG_main}, we see the following invalid constraints,   $\vecX_c \perp\!\!\perp \acaus|E$, $\vecX_c \perp\!\!\perp \acausinline$. Without knowing that the DGP corresponds to a \causal shift, one may apply these constraints that do not condition on the class. Results on small NORB (Table \ref{table:smallnorb_corrlighting_compare_constraints}) show that using the incorrect constraint has an adverse effect: 
the correct constraint yields 85\% accuracy while the best incorrect constraint achieves 79.7\%. Moreover, unlike the correct constraint,  application of the incorrect constraint is sensitive to the $\lambda$ (regularization weight) parameter: as $\lambda$ increases, accuracy drops to less than 40\% (Suppl.~\ref{app:reg-strength}, Figure~\ref{fig:incorrect_constraint_graph_smallnorb}). 

Comparing these constraints on small NORB and MNIST (Table~\ref{table:smallnorb_mnist_compare_constraints}) reveals the importance of making the right structural assumptions. Typically, DG algorithms assume that distribution of causal features $\vecX_c$ does not change across domains (as in the  graph in Fig.~\ref{fig:CG_main_nocorr}). Then, both  $\vecX_c \perp\!\!\perp \acaus|Y, E$ and  $\vecX_c \perp\!\!\perp \acaus|Y$ should be correct constraints. However, conditioning on both $Y$ and $E$ provides a 5\% point gain over conditioning on $Y$ in NORB while the accuracy is comparable for MNIST. Information about the data-generating process explains the result:  Different domains in MNIST include samples from the same distribution whereas small NORB domains are sampled from a different set of toy objects, thus creating a correlation between $\vecX_c$ and $E$, corresponding to the graph in Fig.~\ref{fig:CG_main_corr}.  Without information on the correct DGP, such gains will be difficult.

Finally, we replicate the above experiment for the multi-attribute shift setting for small NORB. To construct an incorrect constraint, we interchange the variables before inputting to \cacm algorithm ($\aind$ gets used as $\acaus$ and vice-versa). 
Accuracy with interchanged variables ($65.1 \pm 1.6$) is  lower than that of correct \cacm ($69.6 \pm 1.6$). More ablations where baseline DG algorithms are provided  \cacm-like attributes as environment are in Suppl.~\ref{app:fairer-comparison}. 

\vspace{-0.5em}
\section{Related Work}
\vspace{-0.5em}
Improving the robustness of models in the face of distribution shifts is a key challenge. Several works have attempted to tackle the domain generalization problem~\citep{wang2021generalizing,zhou2021domain} using different approaches -- data augmentation~\citep{NEURIPS2020_d85b63ef,7780459, inproceedings}, 
and representation learning~\citep{https://doi.org/10.48550/arxiv.1907.02893,5206848,Higgins2017betaVAELB} being popular ones. Trying to gauge the progress made by these approaches, 
Gulrajani and Lopez-Paz~\citep{Gulrajani2021InSO} find that existing state-of-the-art DG algorithms do not improve over ERM. More recent work~\citep{wiles2022a,ye2022ood} uses datasets with different single-attribute shifts and empirically shows that different algorithms perform well over different distribution shifts, but no single algorithm performs consistently across all. 
We provide (1) multi-attribute shift benchmark datasets; (2) a causal interpretation of different kinds of shifts; and (3) an adaptive algorithm to identify the correct regularizer. 
While we focus on images, OoD generalization on graph data is also challenged by multiple types of distribution shifts~\citep{Chen2022InvariancePM}.

\noindent {\bf Causally-motivated learning.} There has been recent work focused on \textit{causal representation learning}~\citep{https://doi.org/10.48550/arxiv.1907.02893,pmlr-v139-krueger21a,10.5555/3524938.3525527,Scholkopfetal21} for OoD generalization. While these works attempt to learn the constraints for causal features from input features, we show that it is necessary to model the data-generating process and have access to auxiliary attributes to obtain a risk-invariant predictor, especially in \textit{multi-attribute} distribution shift setups. Recent research has shown how causal graphs can be used to characterize and analyze the different kinds of distribution shifts that occur in real-world settings~\citep{pmlr-v151-makar22a,veitch2021counterfactual}. Our approach is similar in motivation but we extend from single-domain, single-attribute setups in past work to formally introduce \textit{multi-attribute} distribution shifts in more complex and real-world settings. Additionally, we do not restrict ourselves to  binary-valued classes and attributes. 

\vspace{-0.5em}
\section{Discussion}
\label{sec:discussion}
\vspace{-0.5em}

We introduced \cacm, an adaptive OoD generalization algorithm to characterize \textit{multi-attribute} shifts  and apply the correct independence constraints. Through empirical experiments and theoretical analysis,  we show the importance of modeling the causal relationships in the data-generating process. 
That said, our work has limitations: the constraints from \cacm  are necessary but not sufficient for a risk-invariant predictor (e.g.,   
  unable to remove influence of unobserved spurious attributes).



\section{Ethics and Broader Impact Statement} 
\label{app:broad_impact}

Our work on modeling the data-generating process for improved out-of-distribution generalization is an important advance in building robust predictors for practical settings.  Such prediction algorithms, including methods building on representation learning, are increasingly a key element of decision-support and decision-making systems.  We expect our approach to creating a robust predictor to be particularly valuable in real world setups where \textit{spurious} attributes and real-world multi-attribute settings lead to biases in data.  While not the focus of this paper, \cacm may be applied to mitigate social biases (e.g., in language and vision datasets) whose structures can be approximated by the graphs in Figure~\ref{fig:CG_main}.  Risks of using methods such as \cacm, include excessive reliance or a false sense of confidence.  While methods such as \cacm ease the process of building robust models, there remain many ways that an application may still fail (e.g., incorrect structural assumptions).  AI applications must still be designed appropriately with support of all stakeholders and potentially affected parties, tested in a variety of settings, etc.

\section{Reproducibility Statement}

We provide all required experimental details in Suppl.~\ref{app:exp_details} including dataset details, training details, and hyperparameter search sweeps.
We provide a demo notebook for prediction using \cacm in the DoWhy\footnote{\url{https://github.com/py-why/dowhy}} library. We provide proofs for all our theoretical results in Suppl.~\ref{app:proofs}.

\section{Acknowledgements} 
We would like to thank Abhinav Kumar, Adith Swaminathan, Yiding Jiang, Suriya Gunasekar, and Dhruv Agarwal for helpful discussions and feedback on the draft. We also thank the anonymous reviewers for their valuable feedback.

\bibliography{iclr2023_conference}
\bibliographystyle{iclr2023_conference}

\newpage
\appendix



\section{Presence of auxiliary attribute information in datasets} \label{app:attrs_in_datasets}


Unlike the full causal graph, attribute values as well as the relationships between class labels and attributes is often known. \cacm assumes access to attribute labels $\vecA$ only during training time, which are collected as part of the  data collection process (e.g., as metadata with training data~\citep{pmlr-v151-makar22a}). We start by discussing the availability of attributes in WILDS~\citep{Koh2021WILDSAB}, a set of real-world datasets adapted for the domain generalization setting. 
Attribute labels available in the datasets include, the \textit{time (year)} and \textit{region} associated with satellite images in FMoW dataset~\citep{Christie_2018_CVPR} for predicting land use category, \textit{hospital} from where the tissue patch was collected for tumor detection in Camelyon17 dataset~\citep{article_bandi} and the \textit{demographic} information for CivilComments dataset~\citep{10.1145/3308560.3317593}. 
 \citep{Koh2021WILDSAB} create different domains in WILDS using this metadata, consistent with our definition of $E \in \vecA$ as a special domain attribute. 

In addition, \cacm requires the type of relationship between label $Y$ and attributes. This  is often known, either based on how the dataset was collected or inferred based on domain knowledge or observation. While the distinction between $\aind$ and $\acorr$ can be established using  a statistical test of independence on a given dataset, in general, the distinction between $\acaus, \asel$ and $\aconf$ within $\acorr$ needs to be provided by the user. As we show for the above datasets, the type of relationship can be inferred based on common knowledge or information on how the dataset was collected. 

For FMoW dataset, \textit{time} can be considered an \independent attribute ($\aind$) since it reflects the time at which images are captured which is not correlated with $Y$; whereas \textit{region} is a \confounded attribute since certain regions associated with certain $Y$ labels are over-represented due to ease of data collection. Note that region cannot lead to \causal shift since the decision to take images in a region was not determined by the final label nor \selected for the same reason that the decision was not taken based on values of $Y$.  
Similarly, for the Camelyon17 dataset, it is known that differences in slide staining or image acquisition leads to variation in tissue slides across \textit{hospitals}, thus implying that \textit{hospital} is an \independent attribute ($\aind$)~\citep{Koh2021WILDSAB, KOMURA201834, TELLEZ2019101544}; As another example from healthcare, a study in MIT Technology Review\footnote{\url{https://www.technologyreview.com/2021/07/30/1030329/machine-learning-ai-failed-covid-hospital-diagnosis-pandemic/}} discusses biased data where a person's \textit{position} ($\aconf$) was spuriously correlated with disease prediction as patients lying down were more likely to be ill. As another example, ~\citep{Sagawa*2020Distributionally} adapt MultiNLI dataset for OoD generalization due to the presence of spurious correlation between \textit{negation words} (attribute) and the contradiction label between ``premise'' and ``hypothesis'' inputs. Here, negation words are a result of the contradiction label  (\causal shift), however this relationship between negation words and label may not always hold. Finally, for the CivilComments dataset, we expect the \textit{demographic} features to be \confounded attributes as there could be biases which result in spurious correlation between comment toxicity and demographic information.


To provide examples showing the availability of attributes and their type of relationship with the label,  Table~\ref{table:app:dataset_attrs} lists some popular datasets used for DG and the associated auxiliary information present as metadata. In addition to above discussed datasets, weinclude the popularly used Waterbirds dataset~\citep{Sagawa*2020Distributionally} where the type of \textit{background} (land/water) is assigned to bird images based on bird label; hence, being a \causal attribute (results on Waterbirds dataset are in Table~\ref{table:results_waterbirds_main}).

\begin{table}[h!]
  \caption{Commonly used DG datasets include auxiliary information.} 
  \label{table:app:dataset_attrs}
  \centering
  \begin{tabular}{lcc}
    \toprule
    \textbf{Dataset}     & \textbf{Attribute(s)} & \textbf{$Y-\vecA$ relationship}  \\
    \midrule
    FMoW-WILDS~\citep{Koh2021WILDSAB} & time & $\aind$ \\
    & region & $\aconf$ \\
    Camelyon17-WILDS~\citep{Koh2021WILDSAB} & hospital & $\aind$ \\
    Waterbirds~\citep{Sagawa*2020Distributionally} & background (land/water) & $\acaus$ \\
    MultiNLI~\citep{Sagawa*2020Distributionally} & negation word & $\acaus$ \\
    CivilComments-WILDS~\citep{Koh2021WILDSAB} & demographic & $\aconf$\\ 
    \bottomrule
  \end{tabular}
\end{table}

\section{Proofs}
\label{app:proofs}

\subsection{Proof of Theorem 2.1}
\label{app:proposition_4.1}
\xcnec*
\begin{proof}
We consider $\vecX, Y, \vecX_c, \vecA$ as random variables that are generated according to the data-generating process corresponding to causal graph $\mathcal{G}$. We assume that $\vecX_c$ represents all the parents of $Y$. $\vecX_c$ also causes the observed features $\vecX$ but $\vecX$ may be additionally affected by the attributes $\vecA$. Let $\hat{y} = g(\vecx)$ be a candidate predictor. Then $g(\vecX)$ represents a random vector based on a deterministic function $g$ of $\vecX$.  

Suppose there is an independence constraint $\psi$ that is satisfied by $\vecX_c$ but not $g(\vecX)$. \footnote{In practice, the constraint may be  evaluated on an intermediate representation of $g$, such that $g$ can be written as, $g(\vecX) = g_1(\phi(\vecX))$ where $\phi$ denotes the representation function. However, for simplicity, we assume it is applied on $g(\vecX)$.}  
Below we show that such a predictor $g$ is not risk-invariant:  there exist two data distributions with different $P(\vecA|Y)$ such that the risk of $g$ is different for them. 

Without loss of generality, we can write $g(\vecx)$ as,
\begin{equation}
    g(\vecx) = (g(\vecx) / h(\vecx_c)) * h(\vecx_c) = g'(\vecx,\vecx_c)  h(\vecx_c) \qquad \forall \vecx 
\end{equation}
where $h$ is an arbitrary, non-zero, deterministic function of the random variable $\bm{X}_c$. 
Since $\vecX_c$ satisfies the (conditional) independence constraint $\psi$ and $h$ is a deterministic function, $h(\vecX_c)$ also satisfies $\psi$. Also since the predictor $g(\vecX)$ does not satisfy the constraint $\psi$, it implies that the random vector $g'(\vecX, \vecX_c)$ cannot satisfy the constraint $\psi$. 
Thus, $g'(\vecX,\vecX_c)$ cannot be a function of $\vecX_c$ only; it needs to depend on $\vecX$ too. Since $\vecX$ has two parents in the causal graph, $\vecX_c$ and $\vecA$, this implies that $g'(\vecX, \vecX_c)$ must depend on $\vecA$ too, and hence $g'(\vecX, \vecX_c)$ and $\vecA$ are not independent. 

Now, let us construct two data distributions $P_1$ and $P_2$ with the same marginal distributions of $P(Y)$, $P(\vecA)$ and $P(\vecX_c)$,  such that $P(\vecA|Y)$ changes across them. Note that $P(Y|\vecX_c)$ stays invariant because of the independent and stable causal mechanism property, i.e., $P_1(Y|\vecX_c) = P_2(Y|\vecX_c)$.   
For these two data distributions, change in $P(\vecA|Y)$ implies a change in $P(Y|\vecA)$, i.e., $P_1(Y|\vecA) \neq P_2(Y|\vecA)$,  since $P(Y|\vecA)=P(\vecA|Y)P(Y)/P(\vecA)$.   Also, since $g'(\vecX, \vecX_c)$ and $\vecA$ are not independent,  $P(Y|g'(\vecX, \vecX_c))$ will change, i.e., $P_1(Y|g'(\vecX, \vecX_c)) \neq P_2(Y|g'(\vecX, \vecX_c))$. 

The risk over any distribution $P$ can be written as (using the cross-entropy loss), 
\begin{equation}
\begin{split}
    R_P(g)  &= \E_P[\ell(Y, g'(\vecX, \vecX_c)h(\vecX_c))]\\  
    &= -\E_P[\sum_y y\log{g'(\vecX, \vecX_c)h(\vecX_c)}] \\ 
    &= - \E_P[\sum_y y\log{g'(\vecX, \vecX_c)}] - \E_{P}[\sum_y y\log{h(\vecX_c)}]
    \end{split}
\end{equation}
The risk difference is, 
\begin{equation*}
\begin{split}
&R_{P_2}(g) - R_{P_1}(g) \\
&=\E_{P_1}[\sum_y y\log{g'(\vecX, \vecX_c)}] - \E_{P_2}[\sum_y y\log{g'(\vecX, \vecX_c)}] + \E_{P_1}[\sum_y y\log{h(\vecX_c)}] - \E_{P_2}[\sum_y y\log{h(\vecX_c)}] \\
&=\E_{P_1}[\sum_y y\log{g'(\vecX, \vecX_c)}] - \E_{P_2}[\sum_y y\log{g'(\vecX, \vecX_c)}]
\end{split}
\end{equation*}
where the third and fourth terms cancel out because $P_1(\vecX_c, Y)=P_2(\vecX_c,Y)$ 
and thus the risk of $h(\vecX_c)$ is the same across $P_1$ and $P_2$. However, the risk for  $g'(\vecX, \vecX_c)$ is not the same  since $P_1(Y|g'(\vecX, \vecX_c)) \neq P_2(Y|g'(\vecX, \vecX_c))$. 
Thus the absolute risk difference is non-zero, 
\begin{equation}
    |R_{P_2}(g) - R_{P_1}(g)| > 0
\end{equation}
and $g$ is not a risk-invariant predictor. Hence, satisfying conditional independencies that $\vecX_c$ satisfies is necessary for a risk-invariant predictor.
\end{proof}

\paragraph{Remark.} In the above Theorem, we considered the case where $P(\vecA|Y)$ changes across distributions. In the case where $\vecA$ and $Y$ are independent, $P(\vecA|Y) = P(\vecA)$ and thus $P(\vecA)$ would change across distributions while $P(Y|\vecA)=P(Y)$ remained constant. Since $g'(\vecX, \vecX_c)$ depends on $\vecA$ and $\vecX_c$, 
we obtain  $P_1(Y|g'(\vecX, \vecX_c)) = P_2(Y|g'(\vecX, \vecX_c))$. However, the risk difference can still be non-zero since $P_1(\vecA) \neq P_2(\vecA)$ and the risk expectation $\E_{P}[\sum_y y\log{g'(\vecX, \vecX_c)}]$ is over $P(Y, \vecX_c, \vecA)$.

\subsection{Proof of Proposition 3.1}
\label{app:theorem_4.1}
\indconstraints*
\begin{proof}
The proof follows from d-separation (Pearl, 2009) on the causal DAGs realized from Figure~\ref{fig:CG_main_nocorr}. For each condition, \independent, \causal, \confounded and \selected, we provide the realized  causal graphs below and derive the constraints. 


\begin{figure}[ht]
\centering
\begin{subfigure}{.48\textwidth}
\centering
    \includegraphics[width=0.9\linewidth]{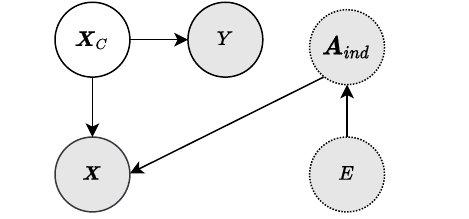}
    \caption{\independent shift}
    \label{fig:app_ind}
\end{subfigure}%
\begin{subfigure}{.48\textwidth}
\centering
    \includegraphics[width=0.9\linewidth]{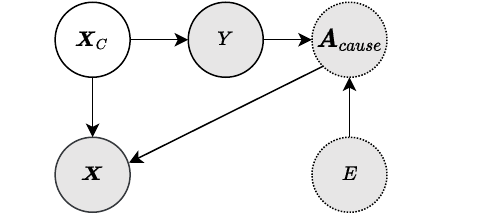}
    \caption{\causal shift}
    \label{fig:app_causal}
\end{subfigure}
\begin{subfigure}{.48\textwidth}
\centering
    \includegraphics[width=0.9\linewidth]{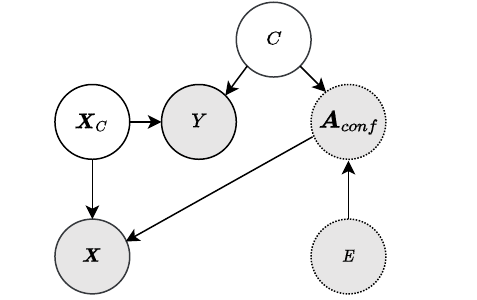}
    \caption{\confounded shift}
    \label{fig:app_conf}
\end{subfigure}%
\begin{subfigure}{.48\textwidth}
\centering
    \includegraphics[width=0.9\linewidth]{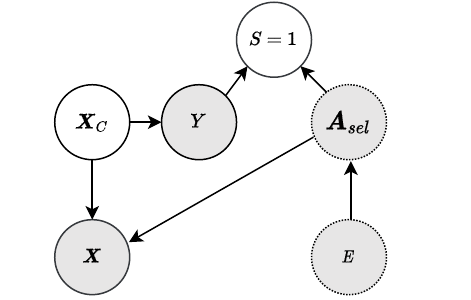}
    \caption{\selected shift}
    \label{fig:app_sel}
\end{subfigure}

\caption{Causal graphs for distinct distribution shifts based on $Y-\vecA$ relationship.
}
\label{fig:app_CG_proofs}
\end{figure}

 \paragraph{Independent:} 
 
 As we can see in Figure~\ref{fig:app_ind}, we have a collider $\vecX$ on the path from $\vecX_c$ to $\aind$ and $\vecX_c$ to $E$. Since there is a single path here, we obtain the independence constraints $\vecX_c \perp\!\!\perp \aind$ and $\vecX_c \perp\!\!\perp E$. Additionally, we see that conditioning on $Y$ or $E$ would not block the path from $\vecX_c$ to $\aind$, which results in the remaining constraints:  $\vecX_c \perp\!\!\perp \aind|Y$; $\vecX_c \perp\!\!\perp \aind|E$ and $\vecX_c \perp\!\!\perp \aind|Y, E$.
 Hence, we obtain, 
 $$\vecX_c \perp\!\!\perp \aind; \vecX_c \perp\!\!\perp E; \vecX_c \perp\!\!\perp \aind|Y; \vecX_c \perp\!\!\perp \aind|E; \vecX_c \perp\!\!\perp \aind|Y, E$$

\paragraph{Causal:}
From Figure~\ref{fig:app_causal}, we see that while the path $\vecX_c \rightarrow \vecX \rightarrow \acaus$ from $\vecX_c$ to $\acaus$ contains a collider $\vecX$, $\vecX_c \not\!\perp\!\!\!\perp \acaus$ due to the presence of node $Y$ as a chain. By the d-separation criteria,  $\vecX_c$ and $\acaus$ are conditionally independent given $Y$ $\implies$ $\vecX_c \perp\!\!\perp \acaus|Y$. Additionally, conditioning on $E$ is valid since $E$ does not appear as a collider on any paths between $\vecX_c$ and $\acaus$ $\implies$ $\vecX_c \perp\!\!\perp \acaus|Y, E$. We get the constraint $\vecX_c \perp\!\!\perp E$ since all paths connecting $\vecX_c$ to $E$ contain a collider (collider $\vecX$ in $\vecX_c \rightarrow \vecX \rightarrow \acaus \rightarrow E$, collider $\acaus$ in $\vecX_c \rightarrow Y \rightarrow \acaus \rightarrow E$). 
Hence, we obtain,
 $$\vecX_c \perp\!\!\perp \acaus|Y; \vecX_c \perp\!\!\perp E; \vecX_c \perp\!\!\perp \acaus|Y, E$$

\paragraph{Confounded:} From Figure~\ref{fig:app_conf}, we see that all paths connecting $\vecX_c$ and $\aconf$ contain a collider (collider $\vecX$ in $\vecX_c \rightarrow \vecX \rightarrow \aconf$, collider $Y$ in $\vecX_c \rightarrow Y \rightarrow C \rightarrow \aconf$). Hence, $\vecX_c \perp\!\!\perp \aconf$. Additionally, conditioning on $E$ is valid since $E$ does not appear as a collider on any paths between $\vecX_c$ and $\aconf$ $\implies$ $\vecX_c \perp\!\!\perp \aconf|E$. We get the constraint $\vecX_c \perp\!\!\perp E$ since all paths connecting $\vecX_c$ and $E$ also contain a collider (collider $\vecX$ in $\vecX_c \rightarrow \vecX \rightarrow \aconf \rightarrow E$, collider $Y$ in $\vecX_c \rightarrow Y \rightarrow C \rightarrow \aconf \rightarrow E$). Hence, we obtain,
 $$\vecX_c \perp\!\!\perp \aconf; \vecX_c \perp\!\!\perp E; \vecX_c \perp\!\!\perp \aconf|E$$

\textbf{Selected:} For the observed data, the selection variable is always conditioned on, with $S$ = 1 indicating inclusion of sample in data. The selection variable $S$ is a collider in Figure~\ref{fig:app_sel} and we condition on it. Hence, $\vecX_c \not\!\perp\!\!\!\perp \asel$. Conditioning on $Y$ breaks the edge $\vecX_c \rightarrow Y$, and hence all paths between $\vecX_c$ and $\asel$ now contain a collider (collider $X$ in $\vecX_c \rightarrow \vecX \rightarrow \asel$) $\implies$ $\vecX_c \perp\!\!\perp \asel|Y$. Additionally, conditioning on $E$ is valid since $E$ does not appear as a collider on any paths between $\vecX_c$ and $\asel$ $\implies$ $\vecX_c \perp\!\!\perp \asel|Y, E$.
Hence, we obtain,
$$\vecX_c \perp\!\!\perp \asel|Y; \vecX_c \perp\!\!\perp \asel|Y, E$$

\subsubsection{Proof of Corollary 3.0.1} 
\objEcorr*

If there is a correlation between $\vecX_c$ and $E$, $\vecX_c \not\!\perp\!\!\!\perp E$. We can see from Figure~\ref{fig:app_CG_proofs} that in the presence of $\vecX_c-E$ correlation, $\vecX_c \not\!\perp\!\!\!\perp \aind$; $\vecX_c \not\!\perp\!\!\!\perp \aind|Y$ (~\ref{fig:app_ind}), $\vecX_c \not\!\perp\!\!\!\perp \acaus|Y$ (~\ref{fig:app_causal}), $\vecX_c \not\!\perp\!\!\!\perp \aconf$ (~\ref{fig:app_conf}) and $\vecX_c \not\!\perp\!\!\!\perp \asel|Y$ (~\ref{fig:app_sel}). Hence, conditioning on environment $E$ is required for the valid independence constraints.
\end{proof}

\subsection{Proof of Theorem 3.1}
\label{app:theorem_4.2}
\fixedcond*
\begin{proof}
The proof follows from an application of Proposition~\ref{thm:ind-constraints} and Theorem~\ref{thm:xc-nec}. 

\textbf{First claim. } Under the canonical graph from Figure~\ref{fig:CG_main}(a or b), the four types of attribute shifts possible are \independent, \causal, \confounded and \selected. From the constraints provided for these four types of attribute shifts in  Proposition~\ref{thm:ind-constraints},  it is easy to  observe that there is no single constraint that is satisfied across all four shifts. Thus, given a data distribution (and hence, dataset) with specific types of multi-attribute shifts such that $\vecX_c$ satisfies a (conditional) independence constraint w.r.t. a subset of attributes $\vecA_s \subseteq \vecA$, it is always possible to change the type of at least one of the those attributes' shifts to create a new data distribution (dataset) where the same  constraint will not hold. 


\textbf{Second Claim.} To prove the second claim, suppose that there exists a predictor for $Y$ based on a single conditional independence constraint over its representation,  $\psi(\phi(\vecX), \vecA_s, Y)$ where $\vecA_s \subseteq \vecA$. Since the same constraint is not valid across all attribute shifts, we can always construct a realized graph  $\mathcal{G}$ (and a corresponding data distribution) by changing the type of at least one attribute shift $A \in \vecA_s$, such that $\vecX_c$ would not satisfy the same constraint as $\phi(\vecX)$. Further, under this $\mathcal{G}$, $\vecX_c$ would satisfy a different constraint on the same attributes.
From Theorem~\ref{thm:xc-nec}, all conditional independence constraints satisfied by $\vecX_c$  under $\mathcal{G}$  are necessary to be satisfied for a risk-invariant predictor. Hence, for the class of distributions $\mathcal{P}_\mathcal{G}$, a  single constraint-based predictor  cannot be  a risk-invariant predictor.
\end{proof}

\fixedcondsingle*
\begin{proof}
Given a fixed (conditional) independence constraint over a predictor's representation, $\psi(\phi(\vecX), \vecA_s, Y)$, the proof of Theorem~\ref{thm:fixed-cond} relied on changing the target relationship type (and hence distribution shift type) for  the attributes involved in the constraint. When $|A|=1$, the constraint is on a single attribute $A$, $\psi(\phi(\vecX), A, Y)$ and the same proof logic follows. From Proposition~\ref{thm:ind-constraints}, given a fixed constraint, we can always choose a single-attribute shift type (and realized DAG $\mathcal{G}$) such that the constraint is not valid for $\vecX_c$. Moreover, under $\mathcal{G}$, $\vecX_c$ would satisfy a different conditional independence constraint wrt the same attribute. From Theorem~\ref{thm:xc-nec}, since the predictor does not satisfy a conditional independence constraint satisfied by $\vecX_c$, it cannot be  a risk-invariant predictor for datasets sampled from $\mathcal{P}_{\mathcal{G}}$.
\end{proof}

\section{\cacm Algorithm}
\label{app:cacm_algo}

We provide the \cacm algorithm for a general graph $\mathcal{G}$ below (Algorithm~\ref{alg:cacm}).

\begin{algorithm}[h]
\caption{\cacm}
\label{alg:cacm}
\begin{algorithmic}
\State \textbf{Input:} Dataset $(\vecx_i, \veca_i, y_i)_{i=1}^{n}$, causal DAG $\mathcal{G}$ 
\State \textbf{Output:} Function $g(\vecx)=g_1(\phi(\vecx)) :\vecX \to Y$
\State $\mathcal{A} \gets$ set of observed variables in $\mathcal{G}$ except $Y, E$ (special domain attribute) 
\State $C \gets \{\}$ \algorithmiccomment{mapping of $A$ to $\bm{A_s}$}
\State \textbf{Phase \RNum{1}:} Derive correct independence constraints 
\For{$A \in \mathcal{A}$}
    \If{$(\bm{X_c}, A)$ are d-separated}
        \State $\bm{X_c} \perp\!\!\perp A$ is a valid independence constraint
    \ElsIf{$(\bm{X_c}, A)$ are d-separated conditioned on any subset $\bm{A_s}$ of the remaining observed variables in $\mathcal{A} \setminus \{A\} \cup \{Y\}$}
        \State $\bm{X_c} \perp\!\!\perp A|\bm{A_s}$ is a valid independence constraint
        \State $C[A]=\bm{A_s}$
    \EndIf
\EndFor
\State \textbf{Phase \RNum{2}:} Apply regularization penalty using constraints derived
\For{$A \in \mathcal{A}$}
    \If{$\bm{X_c} \perp\!\!\perp A$}
        \State $RegPenalty_A = \sum_{|E|} \sum_{i=1}^{|A|} \sum_{j>i} \mmd(P(\phi(\vecx)| A_{i}), P(\phi(\vecx)| A_{j})) $
    \ElsIf{$A$ is in $C$}
        \State $\bm{A_s} = C[A]$
        \State $RegPenalty_{A} = \sum_{|E|} \sum_{a \in \bm{A_s}} \sum_{i=1}^{|A|} \sum_{j>i} \mmd(P(\phi(\vecx)| A_{i}, a), P(\phi(\vecx)| A_{j}, a)) $
    \EndIf
\EndFor
\State $RegPenalty = \sum_{A \in \mathcal{A}} \lambda_ARegPenalty_A$
\State $    g_1,\phi = \arg \min_{g_1,\phi}; \qquad \ell(g_1(\phi(\vecx)), y) + RegPenalty$
\end{algorithmic}
\end{algorithm}

\paragraph{Remark.} If $E$ is observed, we always condition on $E$ because of Corollary~\ref{thm:corr-new}.

For the special case of Figure~\ref{fig:CG_main}, \cacm uses the following regularization penalty ($RegPenalty$) for \independent, \causal, \confounded and \selected shifts,

\begin{align*}
 RegPenalty_{\aind} = \sum_{i=1}^{|E|} \sum_{j>i} \mmd(P(\phi(\vecx)| a_{i, ind}), P(\phi(\vecx)| a_{j, ind})) 
\end{align*}

\begin{align*}
 RegPenalty_{\acaus} = \sum_{|E|} \sum_{y \in Y} \sum_{i=1}^{|\acaus|} \sum_{j>i} \mmd(P(\phi(\vecx)| a_{i, cause}, y), P(\phi(\vecx)| a_{j, cause}, y)) 
\end{align*}

\begin{align*}
 RegPenalty_{\aconf} = \sum_{|E|} \sum_{i=1}^{|\aconf|} \sum_{j>i} \mmd(P(\phi(\vecx)| a_{i, conf}), P(\phi(\vecx)| a_{j, conf})) 
\end{align*}

\begin{align*}
 RegPenalty_{\asel} = \sum_{|E|} \sum_{y \in Y} \sum_{i=1}^{|\asel|} \sum_{j>i} \mmd(P(\phi(\vecx)| a_{i, sel}, y), P(\phi(\vecx)| a_{j, sel}, y)) 
\end{align*}

\section{Experimental Details}
\label{app:exp_details}

\subsection{Datasets}
\label{app:datasets}

\paragraph{MNIST.} 
Rotated~\citep{7410650} and Colored MNIST~\citep{https://doi.org/10.48550/arxiv.1907.02893} present distinct distribution shifts.  While Rotated MNIST only has $\aind$ wrt. $rotation$ attribute ($R$), Colored MNIST only has $\acaus$ wrt. $color$ attribute ($C$). We combine these datasets to obtain a multi-attribute dataset with $\acaus=\{C\}$ and $\aind=\{R\}$. 
Each domain $E_i$ has a specific rotation angle $r_i$ and a specific correlation $corr_i$ between color $C$ and label $Y$. Our setup consists of 3 domains: $E_1, E_2 \in \mathcal{E}_{tr}$ (training), $E_3 \in \mathcal{E}_{te}$ (test). We define $corr_i = P(Y=1|C=1) = P(Y=0|C=0)$ in $E_i$. In our setup, $r_1 = 15^{\circ}, r_2 = 60^{\circ}, r_3 = 90^{\circ}$ and $corr_1 = 0.9,\:corr_2 = 0.8,\:corr_3 = 0.1$. All environments have 25\% label noise, as in~\citep{https://doi.org/10.48550/arxiv.1907.02893}. For all experiments on MNIST, we use a two-layer perceptron consistent with previous works ~\citep{https://doi.org/10.48550/arxiv.1907.02893, pmlr-v139-krueger21a}.

\begin{figure}[!ht]
\centering
    \begin{subfigure}{.33\textwidth}
    \centering
        \includegraphics[width=0.95\linewidth]{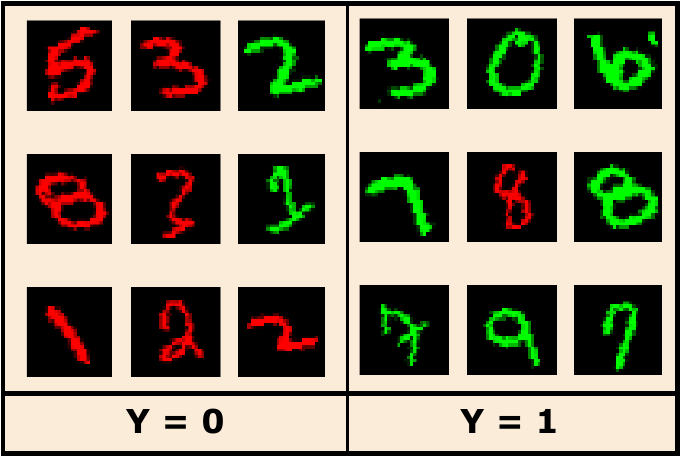}
        \caption{}
        \label{fig:app_vis_mnist1}
    \end{subfigure}%
    \begin{subfigure}{.33\textwidth}
    \centering
        \includegraphics[width=0.95\linewidth]{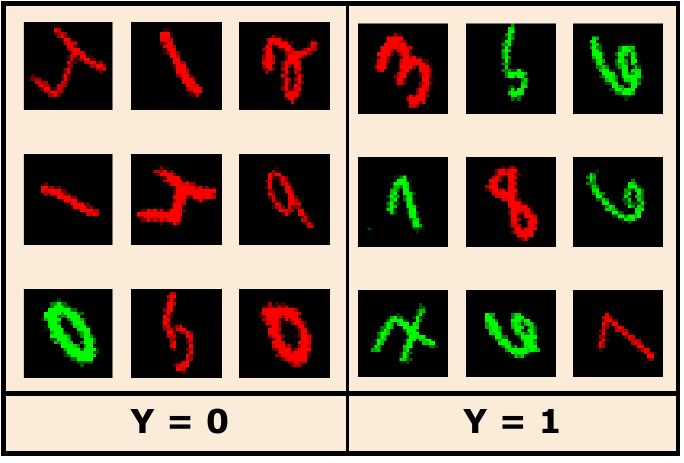}
        \caption{}
        \label{fig:app_vis_mnist2}
    \end{subfigure}%
    \begin{subfigure}{.33\textwidth}
    \centering
        \includegraphics[width=0.95\linewidth]{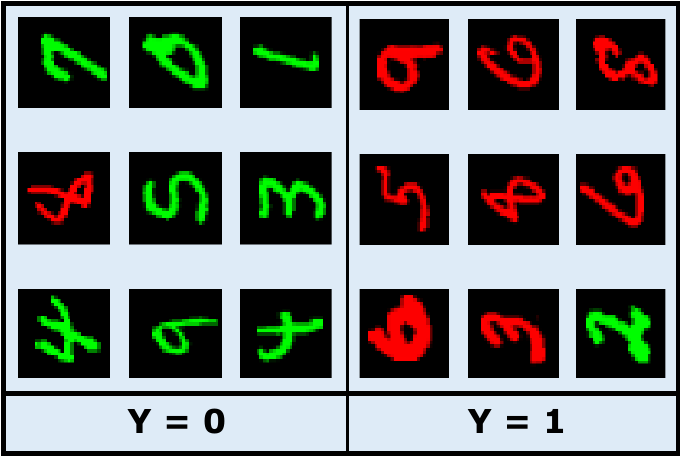}
        \caption{}
        \label{fig:app_vis_mnisttest}
    \end{subfigure}
    \caption{(a), (b) Train and (c) Test domains for MNIST.}
    \label{fig:app_vis_mnist}
\end{figure}

\paragraph{small NORB.} 
Moving beyond simple binary classification, we use small NORB~\citep{1315150}, an object recognition dataset, to create a challenging setup with multi-valued classes and attributes over realistic 3D objects.  It consists of images of toys of five categories
with varying lighting, elevation, and azimuths. The objective is to classify unseen samples of the five categories. \citep{wiles2022a} introduced single-attribute shifts for this dataset. We combine the \causal shift, $\acaus=lighting$ wherein there is a  correlation between lighting condition $lighting_i$ and toy category $y_i$; and \independent shift, $\aind=azimuth$ that varies independently across domains, to generate our multi-attribute dataset $light+azi$. Training domains have 0.9 and 0.95 spurious correlation with $lighting$ whereas there is no correlation in test domain. We add 5\% label noise in all environments. We use ResNet-18 (pre-trained on ImageNet) for all settings and fine tune for our task. 

\begin{figure}[!ht]
\centering
    \begin{subfigure}{.33\textwidth}
    \centering
        \includegraphics[width=0.95\linewidth]{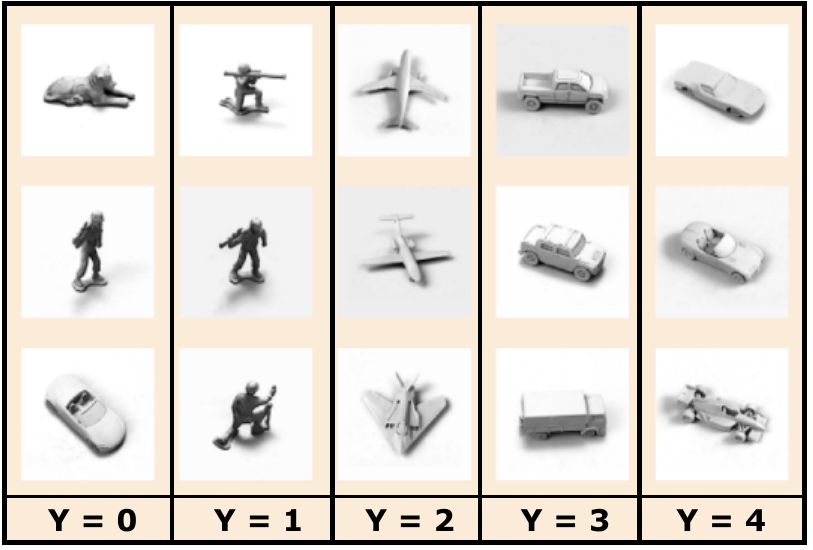}
        \caption{}
        \label{fig:app_vis_norb1}
    \end{subfigure}%
    \begin{subfigure}{.33\textwidth}
    \centering
        \includegraphics[width=0.95\linewidth]{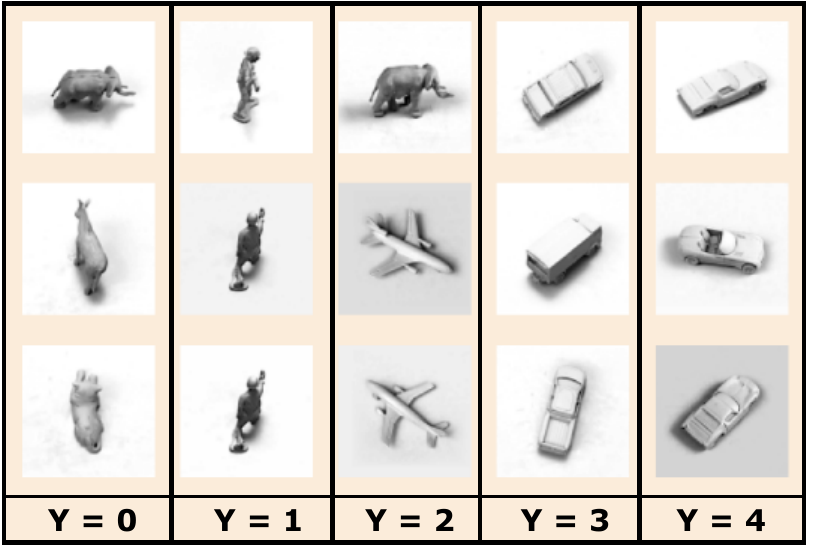}
        \caption{}
        \label{fig:app_vis_norb2}
    \end{subfigure}%
    \begin{subfigure}{.33\textwidth}
    \centering
        \includegraphics[width=0.95\linewidth]{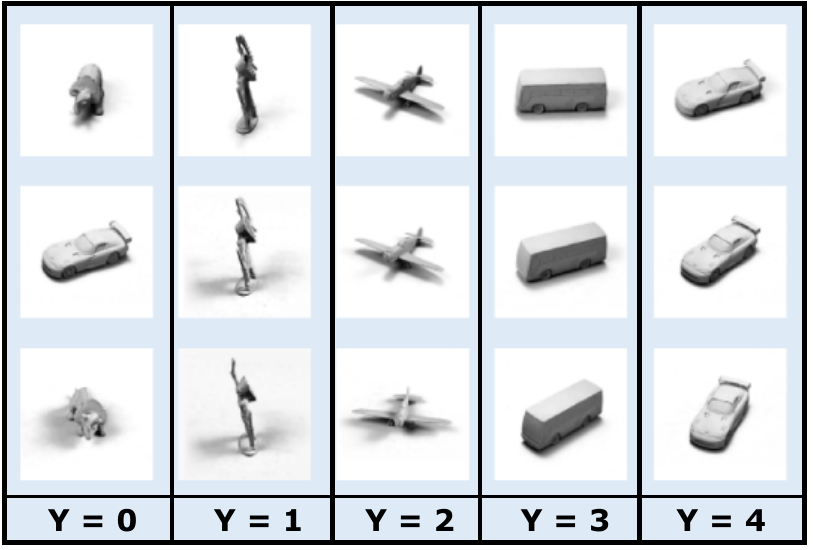}
        \caption{}
        \label{fig:app_vis_norbtest}
    \end{subfigure}
    \caption{(a), (b) Train and (c) Test domains for small NORB.}
    \label{fig:app_vis_norb}
\end{figure}

\paragraph{Waterbirds.} We use the  Waterbirds dataset from \cite{Sagawa*2020Distributionally}. This dataset  classifies birds as ``waterbird" or ``landbird", where bird type ($Y$) is spuriously correlated with background ($bgd$) -- ``waterbird" images are spuriously correlated with ``water" backgrounds (\textit{ocean, natural
lake}) and ``landbird" images with ``land" backgrounds (\textit{bamboo forest, broadleaf forest}). Since background is selected based on $Y$, $\acaus=background$. The dataset is created by pasting bird images from CUB dataset~\citep{WahCUB_200_2011} onto backgrounds from the Places dataset~\citep{7968387}. There is 0.95 correlation between the bird type and background during training i.e., 95\% of all waterbirds are placed against a water background, while 95\% of all landbirds are placed against a land background. We create training domains based on background ($|\mathcal{E}_{tr}| = |\acaus| = 2)$ as in~\citet{yao2022improving}. We evaluate using worst-group error consistent with past work, where a group is defined as ($background, y$). We generate the dataset using the official code from~\citet{Sagawa*2020Distributionally} and use the same train-validation-test splits.

To create the multi-attribute shift variant of Waterbirds, we add weather effects ($\aind$) using the Automold library\footnote{\url{https://github.com/UjjwalSaxena/Automold--Road-Augmentation-Library}}. We add darkness effect (darkness coefficient = 0.7) during training with 0.5 probability and rain effect (rain type = `drizzle', slant = 20) with 1.0 probability during test. Hence, $|\aind|$=3 (\{no effect, darkness, rain\}). Weather effect is applied independent of class label $Y$. Our training domains are based on background and we perform worst-group evaluation, same as the setup described above. Examples from train and test domains for multi-attribute shift dataset are provided in Figure~\ref{fig:app_vis_wb}.

We use ResNet-50 (pre-trained on ImageNet) for all settings consistent with past work~\citep{Sagawa*2020Distributionally, yao2022improving}. All models were evaluated at the best early stopping epoch (as measured by the validation set), again consistent with~\citet{Sagawa*2020Distributionally}.

\begin{figure}[!ht]
\centering
    \begin{subfigure}{.33\textwidth}
    \centering
        \includegraphics[width=0.95\linewidth]{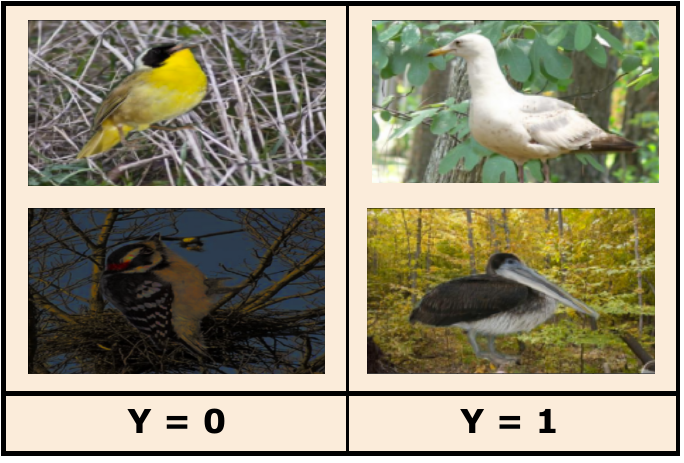}
        \caption{}
        \label{fig:app_vis_wb1}
    \end{subfigure}%
    \begin{subfigure}{.33\textwidth}
    \centering
        \includegraphics[width=0.95\linewidth]{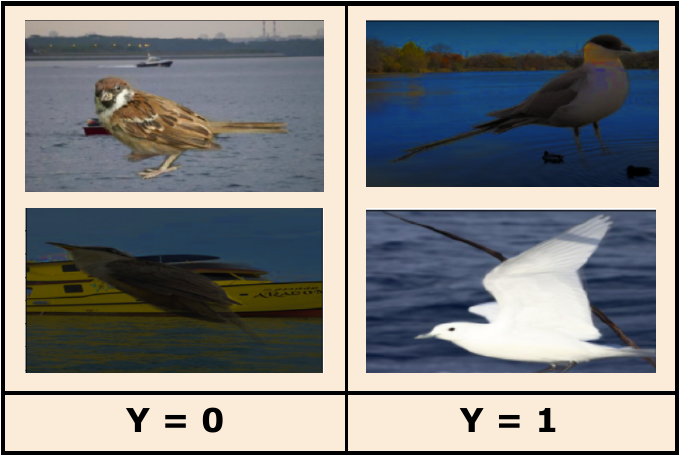}
        \caption{}
        \label{fig:app_vis_wb2}
    \end{subfigure}%
    \begin{subfigure}{.33\textwidth}
    \centering
        \includegraphics[width=0.95\linewidth]{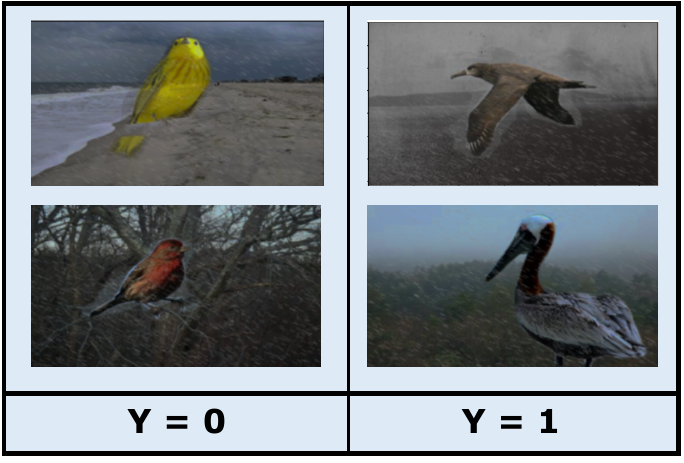}
        \caption{}
        \label{fig:app_vis_wbtest}
    \end{subfigure}
    \caption{(a), (b) Train and (c) Test domains for Waterbirds.}
    \label{fig:app_vis_wb}
\end{figure}

\subsection{Implementation details}
\label{app:impl_details}

All methods are trained using Adam optimizer. MNIST dataset is trained for 5000 steps (default in DomainBed~\citep{Gulrajani2021InSO}) while Waterbirds and small NORB are trained for 2000 steps. Consistent with the default value in DomainBed, we use a batch size of 64 per domain for MNIST. For small NORB, we use a batch size of 128 per domain and a batch size of 16 per domain for Waterbirds.

\paragraph{Model Selection.} We create 90\% and 10\% splits from each domain to be used for training/evaluation and model selection (as needed) respectively. 
We use a validation set that follows the test domain distribution consistent with previous work on these datasets~\citep{https://doi.org/10.48550/arxiv.1907.02893,ye2022ood, wiles2022a, yao2022improving}. Specifically, we adopt the \textit{test-domain validation} from DomainBed for Synthetic, MNIST, and small NORB datasets where early stopping is not allowed and all models are trained for the same fixed number of steps to limit test domain access. For Waterbirds, we perform early stopping using the validation set consistent with past work~\citep{Sagawa*2020Distributionally, yao2022improving}.

\paragraph{MMD implementation details.}
We use the radial basis function (RBF) kernel to compute the MMD penalty. Our implementation is adopted from DomainBed~\citep{Gulrajani2021InSO}. The kernel bandwidth is a hyperparameter and we perform a sweep over the hyperparamaeter search space to select the best RBF kernel bandwidth. The search space for hyperparameter sweeps is provided in Table~\ref{tab:app:sweeps}, where $\gamma$ corresponds to 1/bandwidth.

\paragraph{\cacm and baselines implementation details. } We provide the regularization constraints for different shifts used by \cacm in Section~\ref{app:cacm_algo}. For statistical efficiency, we use a single $\lambda$ value as hyperparameter for MNIST and small NORB datasets. The search space for hyperparameters is given in Table~\ref{tab:app:sweeps}.

In MNIST and NORB, we input images and domains ($E = \aind$) to all baseline methods; \cacm receives additional input $\acaus$. Hence, in the \independent single-attribute shift, \cacm and all baselines have access to exactly the same information.
In Waterbirds, since $E$ is not defined in the original dataset, we follow the setup from~\citet{yao2022improving} to create domains based on backgrounds. Here, we provide images and domains ($E=background$) as input to all baselines except GroupDRO; to ensure fair comparison with GroupDRO, we follow ~\citet{Sagawa*2020Distributionally} and provide 4 \textit{groups} as input based on $(background, y)$, along with images. For \cacm, we do not use background domains but provide the attribute  $\acaus=background$ for the single-attribute dataset, and both $\acaus=background$ and $\aind=weather$ for the multi-attribute shift dataset. Hence, for the single-shift Waterbirds dataset, all baselines receive the same information as \cacm.

\subsection{Hyperparameter search}
\label{app:sweeps}

Following DomainBed~\citep{Gulrajani2021InSO}, we perform a random search 20 times over the hyperparameter distribution and this process is repeated for total 3 seeds. The best models are obtained across the three seeds over which we compute the mean and standard error. The hyperparameter search space for all datasets and algorithms is given in Table \ref{tab:app:sweeps}.

\section{Results}
\label{app:results}

\subsection{Synthetic dataset}
\label{app:synthetic}

Our synthetic dataset is constructed based on the data-generating processes of the slab dataset~\citep{pmlr-v139-mahajan21b, shah2020pitfalls}. 
The original slab dataset was introduced by~\citep{shah2020pitfalls} to demonstrate the simiplicity bias in neural networks as they learn the linear feature which is easier to learn in comparison to the slab feature.
Our extended slab dataset, adds to the setting from~\citep{pmlr-v139-mahajan21b} by using non-binary attributes and class labels to create a more challenging task and allows us to study DG algorithms in the presence of linear spurious features. 

Our dataset consists of 2-dimensional input $\vecX$ consisting of features $X_c$ and ${A}_{\overline{ind}}$. This is consistent with the graph in Figure~\ref{fig:CG_main} where attributes and causal features together determine observed features $\vecX$; we concatenate $X_c$ and ${A}_{\overline{ind}}$ to generate $\vecX$ in our synthetic setup. Causal feature $X_c$ has a non-linear ``slab'' relationship with $Y$ while ${A}_{\overline{ind}}$ has a linear relationship with $Y$. We create three different datasets with \causal(\ref{sec:app_synthetic_causal}), \confounded(\ref{sec:app_synthetic_conf}) and \selected(\ref{sec:app_synthetic_selected}) ${A}_{\overline{ind}}-Y$ relationship respectively.

\noindent\textbf{Implementation details.} In all setups, $X_c$ is a single-dimensional variable and has a uniform distribution $\mathrm{Uniform}[0, 1]$ across all environments. We use the default 3-layer MLP architecture from DomainBed and use mean difference (L2) instead of MMD as the regularization penalty given the simplicity of the data. We use a batch size of 128 for all datasets.

\subsubsection{\causal shift}
\label{sec:app_synthetic_causal}

We have three environments, $E_1, E_2 \in \mathcal{E}_{tr}$ (training) and $E_3 \in \mathcal{E}_{te}$ (test). $X_c$ has a uniform distribution $\mathrm{Uniform}[0, 1]$ across all environments.

\begin{align*}
    y = \left\{ \begin{array}{lll}
0 & \mbox{if $X_c \in [0, 0.2)$ }
&  \\ 
1 & \mbox{if $X_c \in [0.2, 0.4)$ }
&  \\
2 & \mbox{if $X_c \in [0.4, 0.6)$ }
&  \\
3 & \mbox{if $X_c \in [0.6, 0.8)$ }
&  \\
4 & \mbox{if $X_c \in [0.8, 1.0]$ }
&  \\
\end{array}\right.
\end{align*}


\begin{align*}
    A_{cause} = \left\{ \begin{array}{rcl}
y & \mbox{with prob. =}
& p \\ abs(y-1) & \mbox{with prob. =} & 1-p 
\end{array}\right.
\end{align*}


Hence, we have a five-way classification setup ($|Y|$ = 5) with multi-valued attributes. Following~\citep{pmlr-v139-mahajan21b}, the two training domains have $p$ as 0.9 and 1.0, and the test domain has $p$ = 0.0.  We add 10\% noise to $Y$ in all environments. 

\subsubsection{\confounded shift}
\label{sec:app_synthetic_conf}

We have three environments, $E_1, E_2 \in \mathcal{E}_{tr}$ (training) and $E_3 \in \mathcal{E}_{te}$ (test). $X_c$ has a uniform distribution $\mathrm{Uniform}[0, 1]$ across all environments. Our confounding variable $c$ has different functional relationships with $Y$ and $\aconf$ which vary across environments. 

\begin{align*}
    c_{E_1, E_2} = \left\{ \begin{array}{rcl}
1 & \mbox{with prob. =}
& 0.25 \\ 0 & \mbox{with prob. =} & 0.75
\end{array}\right.
    \quad \quad c_{E_3} = \left\{ \begin{array}{rcl}
1 & \mbox{with prob. =}
& 0.75 \\ 0 & \mbox{with prob. =} & 0.25
\end{array}\right.
\end{align*}

The true function for $Y$ is given by,
\begin{align*}
    y_{true} = \left\{ \begin{array}{lll}
0 & \mbox{if $X_c \in [0, 0.25)$ }
&  \\ 
1 & \mbox{if $X_c \in [0.25, 0.5)$ }
&  \\
2 & \mbox{if $X_c \in [0.5, 0.75)$ }
&  \\
3 & \mbox{if $X_c \in [0.75, 1.0]$ }
&  \\
\end{array}\right.
\end{align*}

Observed $Y$ and $A_{conf}$ are functions of confounding variable $c$ and their distribution changes across environments as described below:
\begin{align*}
    y_{E_1, E_2} = \left\{ \begin{array}{rcl}
y_{true} + c & \mbox{with prob. =}
& 0.9 \\ y_{true} & \mbox{with prob. =} & 0.1 
\end{array}\right.
    \quad \quad y_{E_3} = y_{true}
\end{align*}

\begin{align*}
    A_{conf} = \left\{ \begin{array}{rcl}
2*c & \mbox{with prob. =}
& p \\ 0 & \mbox{with prob. =} & 1-p
\end{array}\right.
    \quad ; p_{E_1} = 1.0, p_{E_2} = 0.9, p_{E_3} = 0.8
\end{align*}

\subsubsection{\selected shift}
\label{sec:app_synthetic_selected}

\selected shifts arise due to selection effect in the data generating process and induce an association between $Y$ and $\asel$. A data point is included in the sample only if selection variable $S=1$ holds; $S$ is a function of $Y$ and $\asel$.
The selection criterion may differ between domains~\citep{veitch2021counterfactual}.


We construct three environments, $E_1, E_2 \in \mathcal{E}_{tr}$ (training) and $E_3 \in \mathcal{E}_{te}$ (test). $X_c$ has a uniform distribution $\mathrm{Uniform}[0, 1]$ across all environments. Our selection variable $S$ is a function of $Y$ and $A_{sel}$. We add 10\% noise to $Y$ in all environments.

\begin{align*}
    X_c \sim \mathrm{Uniform}[0, 1];
    \quad \quad A_{sel} \in \{1, 2, 3, 4\}
\end{align*}

The true function for $Y$ is given by,
\begin{align*}
    y_{true} = \left\{ \begin{array}{lll}
0 & \mbox{if $X_c \in [0, 0.25)$ }
&  \\ 
1 & \mbox{if $X_c \in [0.25, 0.5)$ }
&  \\
2 & \mbox{if $X_c \in [0.5, 0.75)$ }
&  \\
3 & \mbox{if $X_c \in [0.75, 1.0]$ }
&  \\
\end{array}\right.
\end{align*}

The function used to decide the selection variable $S$ (and hence the selection shift) varies across environments through the parameter $p$. 

\begin{align*}
    S = 1 \quad if\left\{ \begin{array}{rcl}
A_{sel} + y = 4 & \mbox{with prob. =}
& p \\ A_{sel} - y = 1 & \mbox{with prob. =} & 1-p
\end{array}\right.
    \quad ; p_{E_1} = 0.9, p_{E_2} = 1.0, p_{E_3} = 0.0
\end{align*}

\subsection{A Fixed conditional independence constraint cannot work across all shifts}
\label{app:synthetic-corr}
Here, we compare the performance of two popular independence constraints in the literature~\cite{pmlr-v139-mahajan21b}: unconditional $\vecX_c \perp\!\!\perp \vecA|E$, and conditional on label $\vecX_c \perp\!\!\perp \vecA|Y, E$ (we condition on E for fully generality) on Synthetic \causal, \confounded and \selected shift datasets (Table~\ref{app:risk-inv}).

We train a model using ERM (cross-entropy) where the representation is regularized using either of the constraints. As predicted by Theorem~\ref{thm:fixed-cond}, neither constraint obtains best accuracy on all three datasets. The conditional constraint is better on $\acaus$ and $\asel$ datasets, whereas the unconditional constraint is better on $\aconf$, consistent with Proposition~\ref{thm:ind-constraints}. Predictors with the correct constraint are also more risk-invariant, having lower gap between train and test accuracy.

\begin{table}[!h]
\centering

\caption{Comparison of  constraints $\vecX_c \perp\!\!\perp \vecA|Y, E$ and $\vecX_c \perp\!\!\perp \vecA|E$ in \causal, \confounded and \selected shifts. $\vecX_c \perp\!\!\perp \vecA|Y, E$ is a correct constraint for \causal and \selected shift but invalid for \confounded shift; while $\vecX_c \perp\!\!\perp \vecA|E$ is correct for \confounded but invalid for \causal, \selected.}
  \medskip
  \label{table:app_compare_constraints}
  \centering
  \small
  \begin{tabular}{lcccccc}
    \toprule
    \multirow{3}{*}{\textbf{Constraint}} & \multicolumn{6}{c}{\textbf{Accuracy}}\\
    \cmidrule(r){2-7}
    & \multicolumn{2}{c}{$\acaus$} & \multicolumn{2}{c}{$\aconf$} & \multicolumn{2}{c}{$\asel$} \\
    & train & test & train & test & train & test\\
    \midrule
   
    $\vecX_c \perp\!\!\perp \vecA|E$ & 96.5 $\pm$ 0.2 & 62.4 $\pm$ 5.7 & 81.1 $\pm$ 2.0 & \textbf{67.1 $\pm$ 1.7} & 96.4 $\pm$ 0.4 & 72.3 $\pm$ 0.9\\
    $\vecX_c \perp\!\!\perp \vecA|Y, E$ & 89.1 $\pm$ 3.8 & \textbf{89.3 $\pm$ 2.3}  & 78.4 $\pm$ 2.6 & 60.3 $\pm$ 1.2 & 91.1 $\pm$ 1.7 & \textbf{88.7 $\pm$ 0.9}\\ 
    \bottomrule
  \end{tabular}
  \label{app:risk-inv}
\end{table}

\subsection{Effect of varying Regularization penalty coefficient}
\label{app:reg-strength}
To understand how incorrect constraints affect model generalization capabilities, we study the \causal shift setup in small NORB. From Theorem~\ref{thm:ind-constraints}, we know the correct constraint for $\acaus$: $\vecX_c \perp\!\!\perp \acaus|Y, E$. In addition, we see the following invalid constraint,   $\vecX_c \perp\!\!\perp \acaus|E$.
We compare the performance of these two conditional independence constraints while varying the regularization penalty coefficient ($\lambda$) (Figure~\ref{fig:incorrect_constraint_graph_smallnorb}). We perform our evaluation across three setups with different spurious correlation values of training environments and have consistent findings. We observe that application of the incorrect constraint is sensitive to $\lambda$ (regularization weight) parameter : as $\lambda$ increases, accuracy drops to less than 40\%.
However, accuracy with the correct constraint stays invariant across different values of $\lambda$.

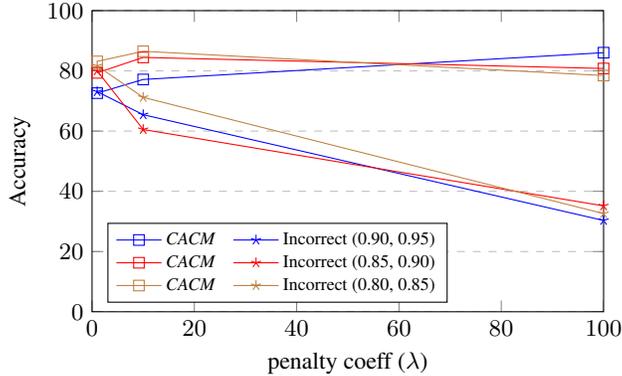
\begin{figure}[!h]
\centering
\begin{tikzpicture}
    \small
      \begin{axis}[ 
        width=0.6\textwidth,
        height=0.4\textwidth,
        xlabel=penalty coeff ($\lambda$),
        ylabel=Accuracy,
        xmin=0, xmax=100,
        ymin=0, ymax=100,
        xtick={0,20,40,60,80,100},
        ytick={0,20,40,60,80,100},
        legend columns=2, 
        legend style={
            nodes={scale=0.75, transform shape},
            /tikz/column 2/.style={
                column sep=5pt,
            },
        },
        legend pos=south west,
        ymajorgrids=true,
        grid style=dashed,
      ] 
        \addplot[
        color=blue,
        mark=square,
        ]
        coordinates {
        (1,72.60)(10,77.14)(100,86.04)
        };
        \addlegendentry{\textit{CACM} 
        }
        
         \addplot[
        color=blue,
        mark=star,
        ]
        coordinates {
        (1,73.10)(10,65.44)(100,30.32)
        };
         \addlegendentry{Incorrect (0.90, 0.95)}
        
        \addplot[
        color=red,
        mark=square,
        ]
        coordinates {
        (1,79.34)(10, 84.46)(100, 80.79)
        };
        \addlegendentry{\textit{CACM} 
        }
        
        \addplot[
        color=red,
        mark=star,
        ]
        coordinates {
        (1,80.01)(10, 60.52)(100, 35.17)
        };
        \addlegendentry{Incorrect (0.85, 0.90)}
        
        \addplot[
        color=brown,
        mark=square,
        ]
        coordinates {
        (1,83.14)(10, 86.50)(100, 78.44)
        };
        \addlegendentry{\textit{CACM} 
        }
        
        \addplot[
        color=brown,
        mark=star,
        ]
        coordinates {
        (1,81.66)(10, 71.20)(100, 32.60)
        };
        \addlegendentry{Incorrect (0.80, 0.85)}
      
      \end{axis}
    \end{tikzpicture}
    \vspace{-2mm}
    \captionof{figure}{Accuracy of \cacm ($\vecX_c \perp\!\!\perp \acaus|Y, E$) and incorrect constraint ( $\vecX_c \perp\!\!\perp \acaus|E$) on small NORB \causal shift with varying $\lambda$ \{1, 10, 100\} and spurious correlation in training environments (in parantheses in legend).
    }
    \label{fig:incorrect_constraint_graph_smallnorb}
\end{figure}
    
\subsection{Providing attribute information to DG algorithms for a fairer comparison}
\label{app:fairer-comparison}
\cacm leverages attribute labels to apply the correct independence constraints derived from the causal graph. However, existing DG algorithms only use the input features $\vecX$ and the domain attribute. Here we provide this attribute information to existing DG algorithms to create a more favorable setting for their application. We show that even in this setup, these algorithms are not able to close the performance gap with \cacm, showing the importance of the causal information through graphs.

\subsubsection{Synthetic dataset}
We consider our Synthetic dataset with \causal distribution shift where our observed features $\vecX = (\vecX_c, \acaus)$. Note that by construction of $\vecX$, one of our input dimensions already consists of $\acaus$. Hence, all baselines do receive information about $\acaus$ in addition to the domain attribute $E$. 

However, to provide a fairer comparison with \cacm, we now \textit{additionally} explicitly make $\acaus$ available to all DG algorithms for applying their respective constraints by creating domains based on $\acaus$ in our new setup. Using the same underlying data distribution,  we group the data (i.e., create environments/domains) based on $\acaus$ i.e, each environment $E$ has samples with same value of $\acaus$.  

\begin{table}[h]
\caption{Synthetic dataset.  Accuracy on unseen domain for \causal distribution shift when $\acaus$ is provided in input (column 2) and when $\acaus$ is additionally used to create domains (column 3).
    } \medskip
    \label{table:app_results_fair_synthetic}
    \centering
    \begin{tabular}{lcc}
        \toprule
        \multirow{2}{*}{\textbf{Algo.}} & \multicolumn{2}{c}{\textbf{Accuracy}}\\
        \cmidrule(r){2-3}
        & $\acaus$ (input) & $\acaus$ (input+domains)
        \\
        \midrule
        ERM & 73.3 $\pm$ 1.3  & 71.8 $\pm$ 3.8\\
        IB-ERM 	& 69.3 $\pm$ 2.4 &  68.2 $\pm$ 4.9 \\
        IRM & 68.4 $\pm$ 2.9 & 64.1 $\pm$ 0.8 \\
        IB-IRM   & 67.8 $\pm$ 2.6 & 73.6 $\pm$ 0.4\\
        VREx  &  77.4 $\pm$ 1.2 & 75.0 $\pm$ 1.6\\
        MMD  &  72.3 $\pm$ 4.3 & 68.8 $\pm$ 4.1\\
        CORAL &  75.5 $\pm$ 0.7  & 72.1 $\pm$ 0.8\\
        DANN  &  60.8 $\pm$ 4.7 & 65.8 $\pm$ 11.9\\
        C-MMD &  71.7 $\pm$ 2.7 & 67.9 $\pm$ 4.9 \\
        CDANN &  71.1 $\pm$ 2.5 & 68.4 $\pm$ 5.8 \\
        GroupDRO  & 79.9 $\pm$ 2.2 & 65.4 $\pm$ 3.4\\
        Mixup   & 58.3 $\pm$ 1.8 & 61.5 $\pm$ 10.7\\
        MLDG  & 73.3 $\pm$ 2.6 & 65.3 $\pm$ 3.3 \\
        SagNet   & 72.5 $\pm$ 2.3 & 71.6 $\pm$ 2.8\\
        RSC   & 70.9 $\pm$ 3.4 & 71.5 $\pm$ 1.7\\
        \midrule
        \cacm & \multicolumn{2}{c}{\textbf{89.3 $\pm$ 2.3}}\\
        \bottomrule
    \end{tabular}
\end{table}

In this setup (Table~\ref{table:app_results_fair_synthetic}, third column), we see IB-IRM, DANN, and Mixup show significant improvement in accuracy but the best performance is still 14\% lower than \cacm. We additionally observe baselines to show higher estimate variance in this setup. This reinforces our motivation to use the causal graph of the data-generating process to derive the constraint, as the attribute values alone are not sufficient. We also see MMD, CORAL, GroupDRO, and MLDG perform much worse than earlier, highlighting the sensitivity of DG algorithms to domain definition. In contrast, \cacm uses the causal graph to study the structural relationships and derive the regularization penalty, which remains the same in this new dataset too. 

\subsubsection{Waterbirds}

We perform a similar analysis on the Waterbirds multi-attribute shift dataset. In order to provide the same information to other DG algorithms as \cacm, we create domains based on $\acaus$ x $\aind$ in this setup (Table~\ref{table:app_results_fair_waterbirds}). We observe mixed results -- while some algorithms show significant improvement (ERM, IRM, VREx, MMD, MLDG, RSC), there is a performance drop for some others (IB-ERM, IB-IRM, CORAL, C-MMD, GroupDRO, Mixup). \cacm uses the knowledge of the causal relationships between attributes and the label and hence the evaluation remains the same. Hence, we empirically demonstrate the importance of using information of the causal graph in addition to the attributes.

\begin{table}[h]
\caption{Waterbirds. Accuracy on unseen domain for multi-attribute distribution shift when $\acaus$ is used to create domains (column 2) and when $\acaus$ x $\aind$ is used to create domains (column 3).} \medskip
  \label{table:app_results_fair_waterbirds}
  \centering
  \small
  \begin{tabular}{lcc}
    \toprule
    \multirow{2}{*}{\textbf{Algo.}} & \multicolumn{2}{c}{\textbf{Worst-group Accuracy}}\\
    \cmidrule(r){2-3}
    & $\acaus$ &  $\acaus$ x $\aind$ \\
    \midrule
    ERM & 37.0 $\pm$ 1.1  & 43.0 $\pm$ 7.8\\
    IB-ERM 	& 40.8 $\pm$ 5.6 &  34.4 $\pm$ 1.0 \\
    IRM & 37.7 $\pm$ 1.7 & 42.2 $\pm$ 2.5 \\
    IB-IRM   & 46.9 $\pm$ 6.5 & 43.3 $\pm$ 4.6\\
    VREx  &  38.1 $\pm$ 2.3 & 48.0 $\pm$ 3.4\\
    MMD  &  45.2 $\pm$ 2.4 & 53.3 $\pm$ 1.9\\
    CORAL &  54.1 $\pm$ 3.0  & 47.5 $\pm$ 2.8\\
    DANN  &  55.5 $\pm$ 4.6 & 57.7 $\pm$ 6.5\\
    C-MMD &  52.3 $\pm$ 1.9 & 45.9 $\pm$ 4.9 \\
    CDANN &  49.7 $\pm$ 3.9 & 50.7 $\pm$ 5.8 \\
    GroupDRO  & 53.1 $\pm$ 2.2 & 40.9 $\pm$ 3.1\\
    Mixup   & 64.7 $\pm$ 2.4 & 50.3 $\pm$ 1.5\\
    MLDG  & 34.5 $\pm$ 1.7 & 43.6 $\pm$ 3.8 \\
    SagNet   & 40.6 $\pm$ 7.1 & 38.0 $\pm$ 1.7\\
    RSC   & 40.9 $\pm$ 3.6 & 46.5 $\pm$ 5.3\\
    \midrule
    \cacm & \multicolumn{2}{c}{\textbf{84.5 $\pm$ 0.6}}\\
    \bottomrule
  \end{tabular}    
\end{table}

\section{$E-X_c$ relationship}
\label{app:rebuttal_ObjE}

The $E$-$\vecX_c$ edge shown in Figure~\ref{fig:CG_main_corr} represents  correlation of $E$  with $\vecX_c$, which can change across environments. As we saw for the $Y$-$\acorr$ edge, this correlation can be  due to causal relationship (Figure~\ref{fig:app_XcE_caus}), confounding with a common cause (Figure~\ref{fig:app_XcE_conf}), or selection (Figure~\ref{fig:app_XcE_sel});  all our results (Proposition~\ref{thm:ind-constraints}, Corollary~\ref{thm:corr-new}) hold for any of these relationships. To see why, note that there is no collider introduced on $\vecX_c$ or $E$ in any of the above cases.

\begin{figure}[!ht]
\centering
    \begin{subfigure}{.33\textwidth}
    \centering
        \includegraphics[width=0.9\linewidth]{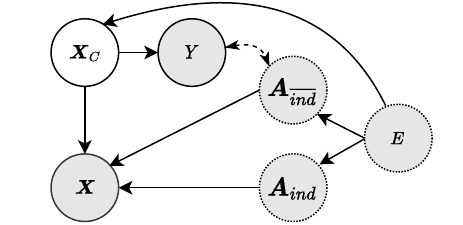}
        \caption{}
        \label{fig:app_XcE_caus}
    \end{subfigure}%
    \begin{subfigure}{.33\textwidth}
    \centering
        \includegraphics[width=0.9\linewidth]{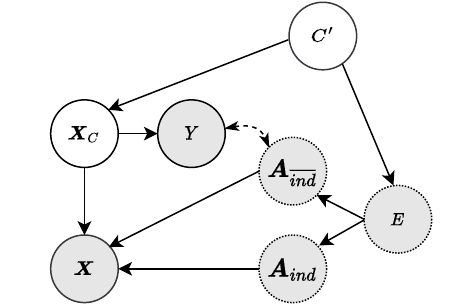}
        \caption{}
        \label{fig:app_XcE_conf}
    \end{subfigure}%
    \begin{subfigure}{.33\textwidth}
    \centering
        \includegraphics[width=0.9\linewidth]{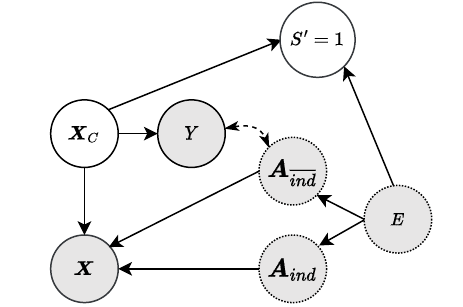}
        \caption{}
        \label{fig:app_XcE_sel}
    \end{subfigure}
    \caption{(a) Causal, (b) Confounded and (c) Selection mechanisms leading to $E$-$\vecX_c$ correlation.}
    \label{fig:app_objE}
\end{figure}

\section{Anti-Causal Graph}  
\label{app:additional_graphs}

\begin{figure}[!ht]
\centering
    \begin{subfigure}{.5\textwidth}
    \centering
        \includegraphics[width=0.9\linewidth]{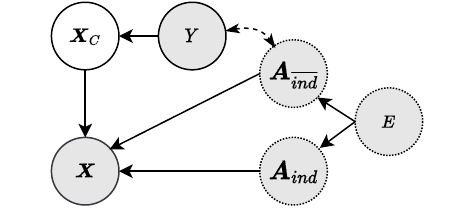}
        \caption{}
        \label{fig:app_ac_nocorr}
    \end{subfigure}%
    \begin{subfigure}{.5\textwidth}
    \centering
        \includegraphics[width=0.9\linewidth]{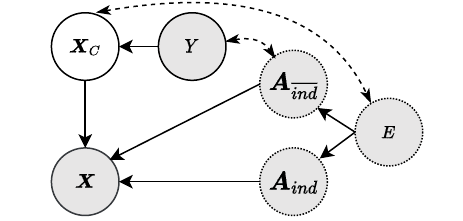}
        \caption{}
        \label{fig:app_ac_corr}
    \end{subfigure}
    \caption{Corresponding anti-causal graphs for Figure~\ref{fig:CG_main}. Note the graphs are identical to Figure~\ref{fig:CG_main} with the exception of the causal arrow pointing from $Y \longrightarrow \vecX_c$ instead of from $\vecX_c \longrightarrow Y$.}
    \label{fig:app_anticausal}
\end{figure}

Figure~\ref{fig:app_anticausal} shows causal graphs used for specifying \textit{multi-attribute} distribution shifts in an anti-causal setting. These graphs are identical to Figure~\ref{fig:CG_main}, with the exception of change in direction of causal arrow from $\vecX_c \longrightarrow Y$ to $Y \longrightarrow \vecX_c$. 

We derive the (conditional) independence constraints for the anti-causal DAG for \independent, \causal, \confounded and \selected shifts.

\begin{restatable}{proposition}{indconstraintsanticausal}\label{thm:ind-constraints-anticausal}
Given a causal DAG realized from the canonical graph in Figure~\ref{fig:app_ac_nocorr}, the correct constraint depends on the relationship of label $Y$ with the nuisance attributes $\bm{A}$. As shown, $\bm{A}$ can be split into $\acorr$, $\aind$ and $E$, where $\acorr$ can be further split into subsets that have a causal ($\acaus$), confounded ($\aconf$), selected  ($\asel$) relationship with $Y$ ($\acorr=\acaus \cup \aconf \cup \asel$). Then, the (conditional) independence constraints that $\vecX_c$ should satisfy are,
\begin{enumerate}
    \item Independent: $\vecX_c \perp\!\!\perp \aind$; $\vecX_c \perp\!\!\perp E$; $\vecX_c \perp\!\!\perp \aind|Y$; $\vecX_c \perp\!\!\perp \aind|E$; $\vecX_c \perp\!\!\perp \aind|Y, E$
    \item Causal: 
     $\vecX_c \perp\!\!\perp \acaus|Y$; $\vecX_c \perp\!\!\perp E$; $\vecX_c \perp\!\!\perp \acaus|Y, E$
    \item Confounded: $\vecX_c \perp\!\!\perp \aconf|Y$; $\vecX_c \perp\!\!\perp E$; $\vecX_c \perp\!\!\perp \aconf|Y, E$
    \item Selected: $\vecX_c \perp\!\!\perp \asel|Y$; $\vecX_c \perp\!\!\perp \asel|Y, E$
\end{enumerate}
\end{restatable}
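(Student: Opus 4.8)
The plan is to follow the strategy of Proposition~\ref{thm:ind-constraints} verbatim: for each of the four shift types I would draw the anti-causal DAG realized from Figure~\ref{fig:app_ac_nocorr} and read off the constraints on $\bm{X}_c$ by applying \textit{d}-separation. The only structural change relative to the causal graphs of Figure~\ref{fig:app_CG_proofs} is that the edge between $Y$ and $\bm{X}_c$ is reversed to $Y \rightarrow \bm{X}_c$, so the first step is to recheck how this single reversal affects each $\bm{X}_c$--$\bm{A}$ and $\bm{X}_c$--$E$ path. A useful preliminary observation is that, because $Y \rightarrow \bm{X}_c \rightarrow \bm{X}$, the node $Y$ is now an \emph{ancestor} of the collider $\bm{X}$ rather than a descendant; hence conditioning on $Y$ never opens the collider at $\bm{X}$, which is what preserves most of the earlier constraints.

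Next I would dispatch the three cases that are unchanged. For the \independent shift, $\aindinline$ has no connection to $Y$, so the unique $\bm{X}_c$--$\aindinline$ path is $\bm{X}_c \rightarrow \bm{X} \leftarrow \aindinline$, blocked by the collider $\bm{X}$; by the preliminary observation this remains blocked after conditioning on any subset of $\{Y, E\}$, giving all five listed constraints. For \causal and \selected, the decisive open path again runs through $Y$: in the causal DAG it becomes the fork $\bm{X}_c \leftarrow Y \rightarrow \acausinline$ (open, so $\bm{X}_c \not\!\perp\!\!\!\perp \acausinline$), and in the selection DAG it becomes $\bm{X}_c \leftarrow Y \rightarrow S \leftarrow \aselinline$ with the collider $S$ opened by the implicit conditioning $S=1$. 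In both cases conditioning on $Y$ blocks the path, reproducing the same constraints as the causal graph, and the $\bm{X}_c \perp\!\!\perp E$ statement (present for causal, absent for selection) is verified exactly as before.

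The step that genuinely changes---and the one I would treat as the crux---is the \confounded shift. In the causal graph the path $\bm{X}_c \rightarrow Y \leftarrow C \rightarrow \aconfinline$ had a collider at $Y$, which made $\bm{X}_c \perp\!\!\perp \aconfinline$ hold unconditionally; reversing the edge turns this into $\bm{X}_c \leftarrow Y \leftarrow C \rightarrow \aconfinline$, in which $Y$ is a chain node and $C$ a fork, so the path is now \emph{open} and $\bm{X}_c \not\!\perp\!\!\!\perp \aconfinline$. The same reversal means that conditioning on $Y$ now \emph{blocks} (rather than opens) this path, so the confounded constraints flip to $\bm{X}_c \perp\!\!\perp \aconfinline|Y$ and $\bm{X}_c \perp\!\!\perp \aconfinline|Y,E$, matching the proposition. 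I would finish by confirming $\bm{X}_c \perp\!\!\perp E$ in the \independent, \causal, and \confounded graphs: every $\bm{X}_c$--$E$ path still meets a collider (at $\bm{X}$, or at the correlated attribute where the $E \rightarrow \bm{A}$ arrow and the confounder/label arrow collide), and none of these colliders is unblocked by the conditioning sets in question. The main obstacle is purely bookkeeping---enumerating every path in each realized DAG and tracking which nodes switch between collider and chain/fork roles under the edge reversal---so the argument is mechanical once the four graphs are fixed.
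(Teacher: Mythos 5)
Your proposal is correct and follows essentially the same route as the paper: the paper's proof is a one-line appeal to the same \textit{d}-separation case analysis, keyed on exactly the observation you isolate as the crux --- that with the $Y \rightarrow \vecX_c$ edge, $Y$ appears only as a chain or fork (never a collider, which is what flips the \confounded case), so conditioning on $Y$ is both valid and required for all $\acorrinline$ attributes. Your write-up is just a more explicit, per-case version of that argument (your side remark that $Y$ was previously a ``descendant'' of the collider $\vecX$ is inaccurate --- it never was --- but nothing in the proof relies on it).
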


\begin{proof}
The proof follows from \textit{d}-separation using the same logic as earlier proof in Section~\ref{app:theorem_4.1}. We observe that  for all attributes $A \in \acorr$ ($\acaus$, $\aconf$, $\asel$), it is required to condition on $Y$ to obtain valid constraints as $Y$ node appears as a chain or fork in the causal graph but never as a collider due to the $Y \longrightarrow \vecX_c$ causal arrow. 
\end{proof}

\begin{restatable}{corollary}{objEcorranticausal}\label{thm:corr-new-anticausal}
All the above derived constraints are valid for Graph~\ref{fig:app_ac_nocorr}. However, in the presence of a correlation between $E$ and $\vecX_c$ (Graph~\ref{fig:app_ac_corr}), only the constraints conditioned on $E$ hold true. 
\end{restatable}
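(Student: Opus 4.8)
The plan is to mirror the proof of the causal-graph version, Corollary~\ref{thm:corr-new}, re-running the \textit{d}-separation analysis on the anti-causal DAGs of Figure~\ref{fig:app_anticausal} rather than on those of Figure~\ref{fig:app_CG_proofs}. The first claim is immediate: the constraints listed in Proposition~\ref{thm:ind-constraints-anticausal} were \emph{derived} by \textit{d}-separation on Graph~\ref{fig:app_ac_nocorr}, so they hold there by construction and nothing further is needed. The entire content lies in the second claim, namely that once the $E$-$\vecX_c$ correlation edge is added (Graph~\ref{fig:app_ac_corr}), every constraint that is \emph{not} conditioned on $E$ breaks, while every constraint that \emph{is} conditioned on $E$ survives.

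First I would isolate the single structural fact that drives everything: adding the $E$-$\vecX_c$ correlation opens a path between $E$ and $\vecX_c$ on which neither endpoint is a collider. By the discussion in Suppl.~\ref{app:rebuttal_ObjE} (Figure~\ref{fig:app_objE}), this correlation can be realized by a direct causal edge, by a common confounder, or by a selection node, and in none of these cases is a collider introduced at $\vecX_c$ or at $E$. In particular $\vecX_c \not\!\perp\!\!\!\perp E$, which already invalidates the constraint $\vecX_c \perp\!\!\perp E$ appearing in the \independent, \causal, and \confounded lists.

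Next, for each shift type I would exhibit the path that the new edge opens. In Graph~\ref{fig:app_ac_nocorr} every route from $\vecX_c$ to an attribute passed through a collider (the observed $\vecX$, or $Y$/a confounder), which is exactly why the unconditioned constraints held. Adding the $E$-$\vecX_c$ correlation supplies a new route $\vecX_c \;\cdots\; E \;\cdots\; A$ that bypasses $\vecX$; since each attribute is adjacent to $E$ in the canonical graph and $E$ is a non-collider on this route, the route is open whenever $E$ is not conditioned on. This yields $\vecX_c \not\!\perp\!\!\!\perp \aind$, $\vecX_c \not\!\perp\!\!\!\perp \aind|Y$ (\independent), $\vecX_c \not\!\perp\!\!\!\perp \acaus|Y$ (\causal), $\vecX_c \not\!\perp\!\!\!\perp \aconf|Y$ (\confounded), and $\vecX_c \not\!\perp\!\!\!\perp \asel|Y$ (\selected). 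Conditioning on $E$ blocks the route at the non-collider $E$, restoring the $E$-conditioned constraints $\vecX_c \perp\!\!\perp \aind|E$, $\vecX_c \perp\!\!\perp \aind|Y,E$, $\vecX_c \perp\!\!\perp \acaus|Y,E$, $\vecX_c \perp\!\!\perp \aconf|Y,E$, and $\vecX_c \perp\!\!\perp \asel|Y,E$ — precisely the ones the corollary keeps.

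I expect the main obstacle to be the bookkeeping in the \selected case, where the $E$-$\vecX_c$ correlation may itself be induced by conditioning on a selection collider. One must verify that, even then, $E$ still acts as a non-collider on every $\vecX_c$-to-attribute route, so that conditioning on $E$ genuinely blocks it; this is exactly the no-new-collider observation of Suppl.~\ref{app:rebuttal_ObjE}, and it is what lets the argument go through uniformly for all three realizations of the $E$-$\vecX_c$ correlation. A secondary check is to confirm that flipping the causal arrow to $Y \rightarrow \vecX_c$ does not disturb the reasoning: the direction of the $Y$-$\vecX_c$ edge only governs whether we must additionally condition on $Y$ (it does, since $Y$ is now a fork/chain rather than a source), and is orthogonal to the $E$-blocking argument, so the anti-causal proof is structurally identical to the causal one.
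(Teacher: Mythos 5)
Your proposal is correct and follows essentially the same route as the paper: the paper proves this corollary (as with its causal counterpart, Corollary~\ref{thm:corr-new}) by the same terse \textit{d}-separation observation — the $E$-$\vecX_c$ correlation opens paths on which $E$ is never a collider (the no-new-collider fact of Suppl.~\ref{app:rebuttal_ObjE}), so every constraint not conditioned on $E$ breaks while the $E$-conditioned ones remain blocked. Your write-up is simply a more explicit version of this, correctly using the anti-causal constraint list (e.g., $\vecX_c \not\!\perp\!\!\!\perp \aconf|Y$ rather than $\vecX_c \not\!\perp\!\!\!\perp \aconf$) and checking both the failure and survival directions.
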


\begin{table}[ht]
    \caption{Search space for random hyperparameter sweeps.} \medskip
    \label{tab:app:sweeps}
    \centering
  \begin{tabular}{cl}
    \toprule
     \textbf{Condition}     & \textbf{Sweeps}   \\
    \midrule
         
         MLP & learning rate: [1e-2, 1e-3, 1e-4, 1e-5] \\  & dropout: 0\\ \midrule
         ResNet & learning rate: [1e-2, 1e-3, 1e-4, 1e-5] \\ & dropout: [0, 0.1, 0.5] \\  \midrule
         MNIST & weight decay: 0\\
         & generator weight decay: 0\\ \midrule
         not MNIST & weight decay: $10^{\mathrm{Uniform}(-6, -2)}$\\
         & generator weight decay: $10^{\mathrm{Uniform}(-6, -2)}$\\ \midrule
         IRM  & $\lambda$: [0.01, 0.1, 1, 10, 100] \\ & iterations annealing: [10, 100, 1000]\\ \midrule
         IB-ERM, IB-IRM & $\lambda_{IB}$: [0.01, 0.1, 1, 10, 100] \\ & $\text{iterations annealing}_{IB}$: [10, 100, 1000]\\ & $\lambda_{IRM}$: [0.01, 0.1, 1, 10, 100] \\ & $\text{iterations annealing}_{IRM}$: [10, 100, 1000]\\ \midrule
         VREx & $\lambda$: [0.01, 0.1, 1, 10, 100] \\ & iterations annealing: [10, 100, 1000]\\ \midrule
         MMD & $\lambda$: [0.1, 1, 10, 100] \\ & $\gamma$: [0.01, 0.0001, 0.000001]\\ \midrule
         CORAL  & $\lambda$: [0.1, 1, 10, 100] \\ \midrule
         DANN, CDANN & generator learning rate: [1e-2, 1e-3, 1e-4, 1e-5] \\ & discriminator learning rate: [1e-2, 1e-3, 1e-4, 1e-5] 
         \\ & discriminator weight decay: $10^{\mathrm{Uniform}(-6, -2)}$ \\& $\lambda$: [0.1, 1, 10, 100] \\ & discriminator steps: [1, 2, 4, 8] \\ & gradient penalty: [0.01, 0.1, 1, 10] \\ & adam $\beta_1$: [0, 0.5] \\ \midrule
         C-MMD  & $\lambda$: [0.1, 1, 10, 100] \\ & $\gamma$: [0.01, 0.0001, 0.000001]\\ \midrule
         GroupDRO & $\eta$: [0.001, 0.01, 0.1]\\
         \midrule
         Mixup  & $\alpha$: [0.1, 1.0, 10.0]\\
         \midrule
         MLDG  & $\beta$: [0.1, 1.0, 10.0]\\
         \midrule
         SagNet & adversary weight: [0.01, 0.1, 1.0, 10.0]\\
         \midrule
         RSC  &  feature drop percentage: $\mathrm{Uniform}(0, 0.5)$\\ & batch drop percentage: $\mathrm{Uniform}(0, 0.5)$\\
         \midrule
         \cacm & $\lambda$: [0.1, 1, 10, 100] \\ & $\gamma$: [0.01, 0.0001, 0.000001]\\ \bottomrule
         
    \end{tabular}
\end{table}

\end{document}